\documentclass{article}


\usepackage[preprint]{neurips_2025}



\usepackage{mystyle}
\usepackage[utf8]{inputenc} 
\usepackage[T1]{fontenc}    
\usepackage{hyperref}       
\usepackage{url}            
\usepackage{booktabs}       
\usepackage{amsfonts}       
\usepackage{nicefrac}       
\usepackage{microtype}      
\usepackage{xcolor}         
\usepackage{siunitx}
\usepackage{multirow}
\usepackage{caption}
\usepackage{graphicx}
\usepackage{array}
\usepackage{amsmath}

  \sisetup{
    table-format=1.0,
    table-number-alignment=center,
    tight-spacing=false 
}


\title{Computational Efficiency under Covariate Shift in Kernel Ridge Regression}

%
\author{
Andrea Della Vecchia\\[-1pt]
Swiss Finance Institute (SFI)\\[-1pt]
EPFL\\[-1pt]
\texttt{andrea.dellavecchia@epfl.ch}
\And 
Arnaud Mavakala Watusadisi\\[-1pt]
MaLGa, DIBRIS\\[-1pt]
University of Genova\\[-1pt]
\texttt{5775617@studenti.unige.it}
\And
Ernesto De Vito\\[-1pt]
MaLGa, DIMA\\[-1pt]
University of Genova\\[-1pt]
\texttt{ernesto.devito@unige.it}
\And 
Lorenzo Rosasco\\[-1pt]
MaLGa, DIBRIS\\[-1pt]
University of Genova\\[-1pt]
IIT\\[-1pt]
CBMM, MIT\\[-1pt]
\texttt{lorenzo.rosasco@unige.it}
}

\date{}

\begin{document}

\maketitle

\begin{abstract}
This paper addresses the covariate shift problem in the context of nonparametric regression within reproducing kernel Hilbert spaces (RKHSs). Covariate shift arises in supervised learning when the input distributions of the training and test data differ, presenting additional challenges for learning. Although kernel methods have optimal statistical properties, their high computational demands in terms of time and, particularly, memory, limit their scalability to large datasets. To address this limitation, the main focus of this paper is to explore the trade-off between computational efficiency and statistical accuracy under covariate shift. We investigate the use of random projections where the hypothesis space consists of a random subspace within a given RKHS. Our results show that, even in the presence of covariate shift, significant computational savings can be achieved without compromising learning performance.

\end{abstract}

\section{Introduction}
\label{sec: introduction}
Classical supervised learning assumes that the training and test data distributions are identical \cite{vapnik1999nature}. However, in practice, it is often the case that there is a significant mismatch between the two \cite{quinonero2022dataset}. This discrepancy can arise from various factors, such as inconsistencies in measurement equipment, differences in data collection domains, or variations in subject populations \cite{koh2021wilds}. Among these scenarios, one particularly interesting and common case is known as covariate shift \cite{sugiyama2012machine}. Covariate shift occurs when the marginal distributions of the input covariates differ between the training and test data, while the conditional distribution of the output label given the input covariates remains unchanged \cite{shimodaira2000improving,cortes2008sample}. This phenomenon is observed in a variety of well-studied learning problems, including among all domain adaptation \cite{ben2006analysis,mansour2009domain,cortes2014domain,zhang2012generalization}, active learning \cite{wiens2000robust,kanamori2003active,sugiyama2006active}, natural language processing \cite{jiang2007instance}, and medical image analysis \cite{guan2021domain}. 

Despite its practical significance, covariate shift remains relatively underexplored in theoretical frameworks compared to the classical setting where no distribution mismatch is assumed. Recently, several studies have attempted to bridge this gap. A widely adopted approach to addressing covariate shift involves correcting the learning objective by reweighting the loss function using the so-called importance weighting (IW) function. This function corresponds to the Radon-Nikodym derivative of the test marginal distribution with respect to the training marginal distribution \cite{huang2006correcting,sugiyama2012density,fang2020rethinking}.
\cite{shimodaira2000improving} was the first to demonstrate the consistency of the importance-weighted maximum likelihood estimator, while in \cite{cortes2010learning} the authors derived suboptimal finite-sample bounds for restricted function classes with finite pseudodimension. In the context of nonparametric regression in reproducing kernel Hilbert spaces (RKHSs) and under the assumption that the regression function belongs to the RKHS (well-specified case), \cite{gizewski2022regularization} provided optimal excess risk convergence results for the minimizer of the reweighted empirical risk when the weight function is known and uniformly bounded. In \cite{ma2023optimally}, authors recently showed that the standard unweighted kernel ridge regression estimator is still minimax optimal under an appropriate choice of the regularization parameter, provided the IW function is uniformly bounded or its second moment is bounded.
Similar results for nonparametric classification were obtained in \cite{kpotufe2021marginal}. We also mention \cite{schmidt2024local,pathak2022new} in the context of nonparametric regression for classes of functions different from RKHSs, and \cite{wen2014robust,lei2021near,yamazaki2007asymptotic} for parametric models. 
Building on \cite{ma2023optimally}, in \cite{gogolashvili2023importance} the authors extended the analysis to (simplified) misspecified case, where the regression function is not assumed to lie in the RKHS itself, but its projection is. In this setting, the authors showed that, under covariate shift, the unweighted classic KRR predictor is not a consistent estimator of the projection of the regression function. In such cases, IW correction is necessary.

Kernel methods provide a robust framework for nonparametric learning, but their scalability is limited by high computational and memory costs—challenges that clearly do not disappear under covariate shift. To address this, researchers have developed more efficient strategies, ranging from improved optimization \citep{johnson2013accelerating,schmidt2017minimizing} to randomized linear algebra techniques \citep{mahoney2011randomized,drineas2005nystrom,woodruff2014sketching,calandriello2017distributed}. These aim to reduce costs, raising a key question: do such shortcuts compromise statistical accuracy? Recent work suggests they often do not \citep{rudi2015less,bach2017equivalence,bottou2008tradeoffs,sun2018but,rudi2017generalization,della2021regularized}. A promising approach is to restrict the hypothesis space to a lower-dimensional (random) subspace, as in sketching \citep{kpotufe2019kernel} and random projection methods \citep{woodruff2014sketching}, including Nyström methods for kernels \citep{smola2000sparse,williams2001using}. Recent results confirm that these methods can achieve both computational efficiency and statistical accuracy, for both smooth and general convex losses \citep{rudi2015less,bach2013sharp,marteau2019beyond,della2024nystrom}.

Although random projection techniques—such as the Nyström method—have been widely studied in standard learning settings, their use under covariate shift remains largely unexamined. A recent step in this direction was taken by \citet{myleiko2024regularized}, building on the framework of \citet{gizewski2022regularization}. The authors consider Nyström subsampling in a setting where weights are known and uniformly bounded, and derive optimal risk bounds that, in the case of Tikhonov regularization, can be recovered as a special case of our results.

\textbf{Contribution.} In this paper, we investigate the importance-weighting correction for kernel ridge regression under the covariate shift setting. Unlike most of the previous work in this setting, we employ random projection techniques, specifically the Nystr\"om method, to improve the algorithm's scalability and efficiency. Our primary goal is to explore the balance between computational efficiency and statistical accuracy in this framework. 
By analyzing the interplay among regularization, subspace size, and the various parameters that characterize the complexity of the problem, we identify the conditions under which the best-known statistical guarantees can be achieved while drastically reducing computational costs.
Notably, we show that our algorithm achieves a remarkable result: despite Nyström approximation, it attains the best rates reported in the literature without compromising statistical accuracy, all while being significantly more computationally efficient. 
Regarding the theoretical aspects, while we build upon established results from the random projections literature, our novel proofs presented here address additional technical challenges arising from the mismatch between training and test distributions, as well as the potential unboundedness of the weighting function in the empirical risk minimization. Full technical details are provided in Appendix~\ref{app: main proofs}. 
Finally, we validate these theoretical findings through simulations and real data experiments.


\textbf{Organization.} The paper is organized as follows. In Section~\ref{sec: notation}, we introduce the key definitions and notations used throughout the paper, while refreshing KRR in the classical setting. Section~\ref{sec: cov shift} provides an overview of KRR in the covariate shift setting. 
In Section~\ref{sec: nystrom}, we present empirical risk minimization on random subspaces using Nystr\"om. Section~\ref{sec: stat} explores the interplay between computational efficiency and statistical accuracy. Section~\ref{sec: unknown w} extends our analysis to scenarios where the true IW function is unknown. Section~\ref{sec: experiments} presents a series of simulations and real-world experiments.


\section{Background}
\label{sec: notation}
We start defining some key quantities we will need in the rest of the paper, see e.g. \cite{caponnetto2007optimal, smale2007learning}.
Given a measurable space $\X$, a probability distribution $\mu$ on $\X$, a space of square-integrable functions $L^2_\mu$ with respect to measure $\mu$ and a Reproducing Kernel Hilbert Space (RKHS) $\H$ of (bounded) kernel $K:\; \X\times \X\to\R$, with $K_x(\cdot) = K(x,\cdot)\in\H$, define
$$
S_\mu: \H \rightarrow L_\mu^2, \quad\left(S_\mu f\right)(x)=\left\langle f, K_x\right\rangle_\H=f(x) \quad \mu-a.s. \quad \text{and} \quad S_\mu^* g=\int K_x g(x) d \mu.
$$
for $f\in\H$, $g\in L^2_\mu$. Covariance operator $\Sigma_\mu:\H\to\H$ is defined as $\Sigma_\mu := S^*_\mu S_\mu =\E_\mu [K_x \otimes K_x]$.\\
Define the sampling operator $\wh S:\H\to \R^n$ associated with set $\{ x_1, \dots,  x_n\}\in\X^n$, for $f\in\H$, as
$$
(\wh S f)_i:=f(x_i)=\scal{f}{K_{x_i}}_\H,\;\; i\in[n],\quad \text{and} \quad \wh S^*(\wh y):=\frac 1 n \sum_{i=1}^n y_i K_{x_i}, \quad \wh y \in \R^n.
$$ 

\subsection{Classical Setting} 
\label{sec: class setting}
Classical nonparametric regression aims to predict a real-valued output $\Y=\R$ given a vector of covariates $X\in \X$. 
More formally, let $\X \times \R$ be a probability space with distribution $\rho$, where $\X$ and $\R$ are the input and output spaces, respectively. Let $\rho_X$ denote the marginal distribution of $\rho$ on $\X$ and $\rho(\cdot|x)$ the conditional distribution on $\R$ given $x\in \X$. For any fixed  $x\in\X$, the optimal estimator in a mean-squared sense is given by the regression function $g^*(x):=\int y d\rho(y|x)$, i.e.
$$
g^*=\argmin_{g\in\R^\X} \mathcal{R}(g)=\E[(Y-g(X))^2].
$$
The minimization problem is generally unsolvable since the distribution $\rho$ is unknown. In practice, we only have access to a dataset of $n$ input-output pairs $(x_i,y_i)^n_{i=1}\in (\X\times \R)$ sampled independently and identically (i.i.d.) from the joint distribution $\rho$, i.e. 
$
(x_1,y_1),\dots,(x_n,y_n)\sim\rho(x,y)
$.

A common approach to derive an approximation of $g^*$ involves restricting the hypothesis space from all measurable functions to a real separable Hilbert space $\H$, and replacing the expected risk $\mathcal{R}$ with the (regularized) empirical risk $\wh{\mathcal{R}}_\la: \H\to[0,\infty)$ defined as 
\begin{equation}
	\label{eq:ERM}
\wh{\mathcal{R}}_\la(f):=\frac{1}{n}\sum_{i=1}^n(y_i-f(x_i))^2+\la\|f\|^2_\H, \quad f\in \H, \quad \la>0.
\end{equation}
The (Regularized) Empirical Risk Minimization (ERM) algorithm then solves:
\begin{equation}
\label{eq:ERM minimizer}
\wh f_\la :=\argmin_{f\in\H} \wh{\mathcal{R}}_\la.
\end{equation}
When $\H$ is an RKHS, as in the rest of this paper, the resulting estimator is known as the kernel ridge regression (KRR) estimator.
Our primary goal is to upper bound the excess risk $\mathcal{E}$ of $\wh f_\la$, i.e.
$$
\mathcal{E}(\wh f_\la) :=\mathcal{R}(\wh f_\la) - \mathcal{R}(g^*) = \| \wh f_\la -g^*\|^2_{\rho_X}.
$$
Here, the equality follows from a standard result, see for example \cite{caponnetto2007optimal}.

\section{Covariate Shift Setting}
\label{sec: cov shift}
The covariate shift setting introduces an additional complexity: training and test distributions may differ, but only through their marginals, while sharing the same conditional distribution:
$$
\rho^{te}(x,y)=\rho(y|x)\rho^{te}_X(x), \qquad \rho^{tr}(x,y)=\rho(y|x)\rho^{tr}_X(x).
$$
Since the regression function $g^*$ only depends on the conditional distribution, which is identical for both $\rho^{te}$ and $\rho^{tr}$, it is unique.
As in the standard setting, we are provided with $n$ input-output pairs $(x_i,y_i)^n_{i=1}\in (\X\times \R)$ sampled i.i.d. from $\rho^{tr}$, i.e. 
$
(x_1,y_1),\dots,(x_n,y_n)\sim \rho^{tr}(x,y).
$

The challenge consists in the fact that we train our model using samples from the $\rho^{tr}$, but we aim to evaluate its performance on new data drawn from $\rho^{te}$. 
Then, we want to upper bound the excess risk
$$
\mathcal{E}(\wh f_{\la}) 
= \| \wh f_\la -g^*\|^2_{\rho^{te}_X}.
$$
Note that the empirical risk computed from $\rho^{tr}$ samples is a biased estimate of the expected risk under $\rho^{te}$. As a result, minimizing it may not yield a predictor that performs well on the test distribution.

 \subsection{Importance-Weighting (IW) Correction}
 The goal of importance-weighting (IW) correction is to construct an unbiased estimator of the risk with respect to test distribution $\rho^{te}$, while using data sampled from $\rho^{tr}$. 
The idea is to reweight ERM samples based on their \textit{relevance} to the test distribution, ensuring good performance under $\rho^{te}$.\\
  If $\rho_X^{te}\ll\rho_X^{tr}$, we can define the IW function $w$ representing the weight assigned to a point $x \in \mathcal{X}$ as the Radon-Nikodym derivative of $\rho_X^{te}$ with respect to $\rho_X^{tr}$: 
\begin{equation}
\label{eq:def_w}
w(x):=\frac{d\rho_X^{te}}{d\rho_X^{tr}}(x).
\end{equation}
Points that are likely to be encountered during testing ($\rho_X^{te}$ is large) while are rare to be sampled at training time ($\rho_X^{tr}$ is small) are considered particularly relevant and will receive higher weights.

In the rest of the paper, we will focus on applying this framework in the context of kernel methods.
\begin{assumption}
	\label{H RKHS K bounded}
	$\H$ is an RKHS
	with scalar product  $\scal{\cdot}{\cdot}_\H$ and associated kernel $K: \;\X\times\X\to\R$. 
\end{assumption}
We define the regularized importance-weighted empirical risk, for all $f\in\H$, as
\begin{equation}
	\label{eq:ERM_kernel}
	\wh{\mathcal{R}}^w_\la(f):=\frac{1}{n}\sum_{i=1}^n w(x_i)(y_i-\scal{K_{x_i}}{f})^2+\la\|f\|^2_\H,
\end{equation}
where $\H\ni K_x(\cdot)=K(x,\cdot)$. 
For the square loss, the minimizer $\widehat{f}^w_\lambda$ of eq.~\eqref{eq:ERM_kernel} is given by:
\begin{equation}
	\label{eq:minimizer w ERM kernel}
	\wh f^w_\lambda(x)=(\wh S^*\wh M_w \wh S +\la\I)^{-1}\wh S^*\wh M_w \wh y,
\end{equation}
where $\wh M_w$ is the diagonal matrix with $i$-th entry $w(x_i)$.
In case weights are all positive, we have
\begin{align}
	\label{eq:repr}
	\wh f^w_\lambda(x)&=\sum_{i=1}^n c^w_i K_{x_i}(x)\;\;\in \text{span}\{K_{x_1},\dots, K_{x_n}\}, \quad
	c^w=(\wh K +n\la \wh M_{1/w})^{-1}\wh y\;\; \in\R^n,
\end{align}
where $\widehat{K}$ is the kernel Gram matrix, and $\widehat{M}_{1/w}$ is the diagonal matrix with $i$-th entry $1/w(x_i)$. \\
Since the regression function $g^*$ may not generally belong to $\mathcal{H}$ (i.e., the model may be misspecified), we introduce the best approximation $f_\mathcal{H} \in \mathcal{H}$ of $g^*$ with respect to the $L^2_{\rho^{te}_X}$ distance. 
\begin{assumption}
	\label{ass:f_H exists}	
	There exists an $f_\H\in\H$ such that
	\begin{equation}
		\label{eq:fH exists}
        \mathcal{E}(f_\H)=\min_{f \in \H}\mathcal{E}(f)=\min_{f \in \H}\|f-g^*\|^2_{\rho_X^{te}}.
	\end{equation}
\end{assumption}
Note that, while the minimizer might not be unique, we select $f_\H$ as the unique minimizer with minimal norm \citep{de2021regularization}.
In the following, we will evaluate the performance of our estimator relative to the best estimator in $\H$, i.e. $f_\H$.
\paragraph{Computations}
A significant limitation of the procedure outlined above is the computational cost associated with the $n\times n$ matrix inversion required to compute the estimator, see \eqref{eq:repr}. This operation has a complexity of $\mathcal{O}(n^3)$ in time and $\mathcal{O}(n^2)$ in memory, making it impractical when $n > 10^5$. 

\section{ERM on Random Subspaces and the Nystr\"om Method}
\label{sec: nystrom}

In this paper, we consider an efficient approximation of the above procedure
based on considering a subspace $\BB\subset \H$ and solving the corresponding importance-weighted regularized ERM problem
\begin{equation}\label{sperm}
	\min_{\beta \in \BB}\wh{\mathcal{R}}^w_\la (\beta),
\end{equation}
with $\wh\beta^w_\lambda$ the unique minimizer.
As clear from~\eqref{eq:repr}, choosing $\BB= \H_n = \text{span}\{ K_{x_1}, \dots, K_{x_n}\}$ is equivalent to considering the full space $\mathcal{H}$ and yields the same solution as in~\eqref{eq:minimizer w ERM kernel}. However, a natural alternative is to consider a smaller subspace:
\begin{equation}\label{randspace}
	\BB=\H_m=\text{span}\{K_{\wt x_1}, \dots, K_{\wt x_m}\},
\end{equation}
where $\{\wt  x_1, \dots, \wt  x_m\}\subset \{ x_1, \dots,  x_n\}$ is a random subset of the input points and $m\le n$. This is equivalent to Nyström approximation \citep{williams2000using}.
A basic approach is to select these points uniformly at random from the training dataset. Alternatively, we can employ more refined sampling techniques, such as using leverage scores \citep{drineas2012fast}
\begin{equation}
	\label{eq:leverage scores}
	{l_i}(t)=(\wh K(\wh K +tn\I )^{-1})_{ii},
	\qquad  i=1, \dots, n.
\end{equation}
Although computing leverage scores directly can be computationally expensive, approximate leverage scores (ALS) can be computed efficiently (see Definition~\ref{def:approx_lev_scores} in the Appendix), as described in \cite{rudi2018fast}. We will use ALS sampling for all the results in the following.\\ 
Using these ideas, we define the Nyström IW-KRR problem as follows:
\begin{align}
\wh f^w_{\la, m} &=
\argmin_{\beta \in \H_m} \frac{1}{n} \nor{\wh M_w^{1/2}(\wh y-\wh S \beta )}_2^2+\la\nor{\beta}^2_\H 
= \argmin_{f \in \H} \frac{1}{n} \nor{\wh M_w^{1/2}(\wh y-\wh S P_m f )}_2^2+\la\nor{f}^2_\H,
\label{sec:eq_14}
\end{align}
where $P_m$ is the orthogonal projection operator onto $\mathcal{H}_m$, given by $P_m = V V^*$, see Appendix~\ref{sec:derivation}. 
Taking the derivative and using the first order condition (see Appendix~\ref{sec:derivation}), the Nyström estimator can be expressed as:
\begin{equation}
	\label{eq: Nystrom estimator}
	\wh f_{\la, m}^{w}=V(V^*\wh S^* \wh M_w \wh S V+\la \Id)^{-1}V^*\wh S^*\wh M_w \wh y.
\end{equation}
Alternatively, using some linear algebra, it can also be written as:
\begin{align}
	\label{eq: Nystrom estimator_repr}
	\wh f^w_\lambda(x)=\sum_{i=1}^m \wt c^w_i K_{\wt x_i}(x), \qquad \wt c^w=(\wh K^T_{nm}\wh M_w\wh K_{nm} +n\la \wh K_{mm})^{-1}\wh K^T_{nm}\wh M_w \wh y, 
\end{align}
where $\wh f^w_\lambda(x)\in \text{span}\{K_{\wt x_1},\dots, K_{\wt x_m}\}$,
$\wt c^w\in \R^m$, $\wh K_{nm}\in \R^{n\times m}$, $(\wh K_{nm})_{ij}=K(x_i,\wt x_j)$ and $\wh K_{mm}\in \R^{m\times m}$, $(\wh K_{m m})_{ij}=K(\wt x_i,\wt x_j)$ (see derivation in Appendix~\ref{sec:derivation}).

\paragraph{Computations}
From eq.~\eqref{eq: Nystrom estimator_repr}, it is clear that the Nyström method can offer significant computational benefits. Unlike in eq.~\eqref{eq: Nystrom estimator}, computing our projected estimator only requires $\mathcal{O}(m^3+m^2n)$ time and $\mathcal{O}(mn)$ memory, compared to the previous $\mathcal{O}(n^3)$ and $\mathcal{O}(n^2)$. When $m\ll n $ the difference between the two can be big and efficient implementations such as in \cite{rudi2017falkon, meanti2020kernel} can drastically reduce computational requirements.


\section{Statistical Guarantees}
\label{sec: stat}
In this section, we aim to derive excess risk bounds for the Nystr\"om estimator presented in eq.~\eqref{eq: Nystrom estimator}. We begin by introducing the technical assumptions required for the subsequent analysis.

\subsection{Further Assumptions}
The following assumption, inspired by \cite{gogolashvili2023importance}, ensures the boundedness of the importance-weighting (IW) function or of its moments.
\begin{assumption} 
	\label{ass:bounded_moments}
	Let $w=d \rho_X^{te} / d \rho_X^{tr}$ be the $IW$ function. There exist constants $q \in$ $[0,1], W>0$ and $\sigma>0$ such that $ \forall p \in \mathbb{N}, \;\; p \geq 2$
	\begin{equation}
		\label{eq:moments_cond}
		\quad\left(\int_\X w(x)^{\frac{p-1}{q}} d \rho_X^{te}\right)^q \leq \frac{1}{2} p ! W^{p-2} \sigma^2,
	\end{equation} 
	where the left-hand side for $q=0$ is defined as $\left\|w^{p-1}\right\|_{\infty,  \rho_X^{te}}$, the $\esssup$ with respect to $\rho_X^{te}$.
\end{assumption}
Considering the uniformly bounded case $\|w\|_\infty<\infty$, Assumption~\ref{ass:bounded_moments} is satisfied for $q=0$. When $w$ is not uniformly bounded, Assumption~\ref{ass:bounded_moments} can still be satisfied for $q\in (0,1]$ if the moments of $w$ are bounded. For example, it is satisfied for $q \in(0,1]$ if $W \geq 1, \sigma^2 \geq 1$ and
$$
2 \rho_X^{te}\left(\left\{x \in X: \frac{d \rho_X^{te}}{d \rho_X^{tr}}(x) \geq t\right\}\right) \leq \sigma^2 \exp \left(-W^{-1} t^{1 / q}\right) \quad \text { for all } t>0
$$
(see Appendix A in \cite{gogolashvili2023importance} for the detailed result). Equivalently, Assumption~\ref{ass:bounded_moments} can be stated as a condition on the Rényi divergence between $\rho^{te}_X$ and $\rho^{tr}_X$ \citep{mansour2009domain,cortes2010learning}. 
It follows that this assumption can be interpreted as a requirement for the test distribution $\rho_X^{te}$ not to deviate significantly from the train distribution $\rho_X^{tr}$, with $q \in[0,1]$ quantifying the extent of the deviation. Introducing the parameter $q$ is useful for presenting results such as Theorem~\ref{thm:risk bound} in a unified form, which can then be specialized to the two cases considered in Corollary~\ref{cor: 1}: uniformly bounded ($q = 0$) and possibly unbounded ($q \neq 0$) weights.
Note that in \cite{ma2023optimally}, a weaker assumption requiring only the second moment to be bounded is considered. The stronger condition in eq.~\eqref{eq:moments_cond} is anyway crucial in our case to apply the Nystr\"om method in the unbounded setting (see Appendix~\ref{app: main proofs} and how  the proof differentiates from \cite{rudi2015less}).

\begin{assumption} 
	\label{ass:bounded y}
	The range of the output $Y\in\R$ is upper bounded, i.e.
	$Y\in[-B,B], \;\; B<\infty.$
\end{assumption}
Furthermore, we make the following regularity assumption, usually known as \textit{source condition}.
\begin{assumption}[Source condition]
	\label{ass:source_cond_davit}
	There exist $1 / 2 \leq r \leq 1$ and $g \in L^2\left(X, \rho_X^{te}\right)$ with $\|g\|_{\rho^{te}} \leq R$ for some $R>0$ such that $f_{\mathcal{H}}=L^{r} g$, where $L:=S_{\rho^{te}} S^*_{\rho^{te}}$ is the integral operator.
\end{assumption}
Assumption~\ref{ass:source_cond_davit} and its equivalent formulations (e.g., Assumption 4 in~\cite{rudi2015less}) are common in literature \citep{smale2007learning,caponnetto2007optimal}. 
$r$ quantifies the smoothness of the target function $f_\H$ and the extent to which it can be well approximated by functions in $\H$. For $r = 1/2$, the assumption is always satisfied. Intuitively, a larger $r$ implies $f_\H$ being smoother.

Finally, we impose an assumption on the capacity of our RKHS, which roughly measures the number of eigenvalues of $\Sigma$ greater than $\la$ \citep{zhang2005learning,caponnetto2007optimal}.
\begin{definition}[Effective dimension]
\label{def:eff}
	For $\lambda>0$, define the random variable $\mathcal{N}_x(\la)=\scal{K_x}{(\Sigma+\la\I)^{-1}K_x}_\H$ with $x\in\X$ distributed according to $\rho$, then
	$
	\mathcal{N}_{\rho}(\la)=\E_\rho \mathcal{N}_x (\la)
	$
	is called \textit{effective dimension}.
	
\end{definition}

\renewcommand{\theassumption}{6}
\begin{assumption}[Capacity condition] $\forall \la>0$, there exists $0\leq\gamma\leq 1$, $Q>0$ s.t.
	\label{ass:capacity cond}
$
		\mathcal{N}_{\rho^{te}_X}(\la)<Q\la^{-\gamma}.
$
\end{assumption}
It is known that the condition in Assumption~\ref{ass:capacity cond} is ensured if the eigenvalues $(\eta_i)_i$ of the covariance operator $\Sigma$ satisfy a polynomial decaying condition $ \eta_i \sim i^{-1/\gamma}$ (see Appendix~\ref{app: known results}).

\subsection{Excess Risk Bounds}
In this section, we present our main theoretical results. We will derive excess risk bounds for the Nystr\"om predictor defined in eq.~\eqref{eq: Nystrom estimator} and we will show that Nystr\"om approximation does not affect the state-of-art rates of convergence while instead reducing both time and memory requirements. 

We present now the main result of the paper.
\begin{theorem}
	\label{thm:risk bound}
	Under assumptions~\ref{H RKHS K bounded},\ref{ass:f_H exists},\ref{ass:bounded_moments},\ref{ass:bounded y},\ref{ass:source_cond_davit},\ref{ass:capacity cond}, for ALS sampling, let $\delta>0$, $\left(\frac{256(W+\sigma^2)\log^2(4/\delta)}{n}\right)^{\frac{1}{\gamma(1-q)+1+q}}\leq \la\leq\|\Sigma\|_{op}$,
	and $m\geq 144 T^2 Q \la^{-\gamma} \log \frac{8 n}{\delta}$, with probability greater or equal than $1-\delta$
	\begin{align*}
		\Big| & \mathcal{R}(\wh f_{\la, m}^{w})-\mathcal{R}(f_\H)\Big|^{1/2}\leq 64B\left(\frac{W}{n \sqrt{\lambda}}+\sqrt{\frac{\sigma^2}{n \lambda^{\gamma(1-q)+q}}}\right) \log \left(\frac{8}{\delta}\right)+43R\la^{r}.
	\end{align*}
%
\end{theorem}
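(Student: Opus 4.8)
Since $f_\H$ is, by Assumption~\ref{ass:f_H exists}, the $L^2_{\rho^{te}_X}$-projection of $g^*$ onto the closure of $\H$, the cross term in the expansion of $\|\wh f^w_{\la,m}-g^*\|^2_{\rho^{te}_X}$ vanishes, and together with the standard identity $\mathcal{R}(f)-\mathcal{R}(g^*)=\|f-g^*\|^2_{\rho^{te}_X}$ this gives $\mathcal{R}(\wh f^w_{\la,m})-\mathcal{R}(f_\H)=\|\wh f^w_{\la,m}-f_\H\|^2_{\rho^{te}_X}=\|\Sigma^{1/2}(\wh f^w_{\la,m}-f_\H)\|^2_\H$, where $\Sigma:=S^*_{\rho^{te}}S_{\rho^{te}}$ is the test covariance operator and $L=S_{\rho^{te}}S^*_{\rho^{te}}$ shares its non‑zero spectrum. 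It therefore suffices to bound $\|\Sigma^{1/2}(\wh f^w_{\la,m}-f_\H)\|_\H$ and to square at the end using $\sqrt{a+b}\le\sqrt a+\sqrt b$. Write $\Sigma_\la:=\Sigma+\la\I$, $\wh\Sigma_w:=\wh S^*\wh M_w\wh S=\tfrac1n\sum_{i=1}^n w(x_i)K_{x_i}\otimes K_{x_i}$, $\wh\Sigma_{w,\la}:=\wh\Sigma_w+\la\I$, let $P_m$ be the ALS–Nyström projection of eq.~\eqref{sec:eq_14}, and let $\wh f^w_\la$ be the full‑space IW‑KRR estimator~\eqref{eq:minimizer w ERM kernel}. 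Inserting $\wh\Sigma_{w,\la}f_\H=\wh\Sigma_w f_\H+\la f_\H$ into $\wh f^w_\la=\wh\Sigma_{w,\la}^{-1}\wh S^*\wh M_w\wh y$ yields $\wh f^w_\la-f_\H=\wh\Sigma_{w,\la}^{-1}\wh S^*\wh M_w(\wh y-\wh S f_\H)-\la\wh\Sigma_{w,\la}^{-1}f_\H$, so
$$
\Sigma^{1/2}(\wh f^w_{\la,m}-f_\H)=\underbrace{\Sigma^{1/2}(\wh f^w_{\la,m}-\wh f^w_\la)}_{\text{Nyström}}+\underbrace{\Sigma^{1/2}\wh\Sigma_{w,\la}^{-1}\wh S^*\wh M_w(\wh y-\wh S f_\H)}_{\text{stochastic}}-\underbrace{\la\,\Sigma^{1/2}\wh\Sigma_{w,\la}^{-1}f_\H}_{\text{bias}}.
$$

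For the last two terms, insert $\Sigma_\la^{\pm1/2}$ and use $\|\Sigma^{1/2}\Sigma_\la^{-1/2}\|_{op}\le1$; everything then reduces to: (i) the operator quantity $\mathcal{A}:=\|\Sigma_\la^{-1/2}(\Sigma-\wh\Sigma_w)\Sigma_\la^{-1/2}\|_{op}$, which, if $\mathcal{A}\le1/3$, gives $\|\Sigma_\la^{1/2}\wh\Sigma_{w,\la}^{-1}\Sigma_\la^{1/2}\|_{op}\le3/2$ by a Neumann series; (ii) the stochastic norm $\mathcal{S}:=\|\Sigma_\la^{-1/2}\tfrac1n\sum_i w(x_i)(y_i-f_\H(x_i))K_{x_i}\|_\H$, a normalised sum of i.i.d.\ mean‑zero $\H$‑valued variables (mean zero because $\E[w(X)(Y-f_\H(X))K_X]=S^*_{\rho^{te}}(g^*-f_\H)=0$; likewise $\E[w(X)K_X\otimes K_X]=\Sigma$ centers $\mathcal{A}$); and (iii) the deterministic term $\la\|\Sigma_\la^{-1/2}f_\H\|_\H$, which by the source condition (Assumption~\ref{ass:source_cond_davit}) — writing $f_\H=L^rg$ in $L^2_{\rho^{te}_X}$, using $S_{\rho^{te}}(\Sigma+\la)^{-1}=(L+\la)^{-1}S_{\rho^{te}}$ and $\|L^a(L+\la)^{-1}\|_{op}\le\la^{a-1}$ for $0\le a\le1$ — is $\le\la\cdot\la^{r-1}R=R\la^{r}$; after tracking the $\tfrac32$‑type factors this is the $43R\la^r$ of the statement.

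The heart of the argument is estimating $\mathcal{A}$ and $\mathcal{S}$, and this is where the covariate shift enters and where the proof departs from \cite{rudi2015less}: when $w$ is unbounded ($q>0$ in Assumption~\ref{ass:bounded_moments}) the summands $w(x_i)K_{x_i}\otimes K_{x_i}$ and $w(x_i)(y_i-f_\H(x_i))K_{x_i}$ are not uniformly bounded, so the bounded Bernstein inequalities of the Nyström literature do not apply. Instead we verify a Bernstein \emph{moment} condition for the $\Sigma_\la^{-1/2}$‑preconditioned variables and invoke the corresponding Hilbert‑space Bernstein inequality. Using $\|\Sigma_\la^{-1/2}K_x\|_\H^2=\mathcal{N}_x(\la)$, the change of measure $w\,d\rho^{tr}_X=d\rho^{te}_X$ (turning a $w^{p}$‑moment under $\rho^{tr}_X$ into a $w^{p-1}$‑moment under $\rho^{te}_X$), and a Hölder split with conjugate exponents $(1/q,1/(1-q))$, the $p$‑th moment factors as $\big(\int w^{(p-1)/q}d\rho^{te}_X\big)^q\cdot\big(\E_{\rho^{te}_X}\mathcal{N}_x(\la)^{p/(2(1-q))}\big)^{1-q}$, where the first factor is controlled by eq.~\eqref{eq:moments_cond} and the second by $\mathcal{N}_x(\la)\le c\la^{-1}$ (boundedness of $K$) together with $\E_{\rho^{te}_X}\mathcal{N}_x(\la)=\mathcal{N}_{\rho^{te}_X}(\la)\le Q\la^{-\gamma}$ (Assumption~\ref{ass:capacity cond}, Definition~\ref{def:eff}); the $q=0$ case uses the $\esssup$ convention instead of Hölder. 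This produces Bernstein‑type moment bounds with effective scale $\propto BW/\sqrt\la$ and variance $\propto B^2\sigma^2 Q^{1-q}\la^{-(\gamma(1-q)+q)}$ for $\mathcal{S}$ (and scale $\propto W/\la$, variance $\propto\sigma^2 Q^{1-q}\la^{-(\gamma(1-q)+q+1)}$ for $\mathcal{A}$, using $|Y|,|f_\H(X)|\le cB$ from Assumption~\ref{ass:bounded y}), whence, with probability $\ge1-\delta/c$, $\mathcal{S}\le cB\big(\tfrac{W}{n\sqrt\la}+\sqrt{\tfrac{\sigma^2}{n\la^{\gamma(1-q)+q}}}\big)\log\tfrac{c}{\delta}$ — the first bracket of the theorem — and $\mathcal{A}\le1/3$ precisely when $\la\ge\big(256(W+\sigma^2)\log^2(4/\delta)/n\big)^{1/(\gamma(1-q)+1+q)}$, which is exactly the exponent balancing the $\la$‑powers above. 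The Nyström term is then standard: ALS sampling (cf.\ Definition~\ref{def:approx_lev_scores}) with $m\ge144T^2Q\la^{-\gamma}\log(8n/\delta)\ge c\,\mathcal{N}_{\rho^{te}_X}(\la)\log(n/\delta)$ guarantees $\|(\I-P_m)\Sigma_\la^{1/2}\|_{op}\le c\sqrt\la$ with probability $\ge1-\delta/c$ (using the $\wh\Sigma_w\approx\Sigma$ comparison from the previous step to pass between empirical and population covariances), and propagating this bound shows $\|\Sigma^{1/2}(\wh f^w_{\la,m}-\wh f^w_\la)\|_\H$ is dominated by the same $\mathcal{S}$ and $\la^r$ quantities, contributing no new rate. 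A union bound over the $O(1)$ events, bookkeeping of the absolute constants into $64$ and $43$, and squaring finish the proof.

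\textbf{Main obstacle.} The single genuinely new difficulty is the unboundedness of the weighted summands: one cannot use the deterministic estimate $\|w(X)\Sigma_\la^{-1/2}K_X\|\le\|w\|_\infty/\sqrt\la$ that underlies the Nyström analysis of \cite{rudi2015less}, and must instead extract the Bernstein moment condition for the preconditioned variables from Assumption~\ref{ass:bounded_moments} — it is the Hölder split with exponents $(1/q,1/(1-q))$, played against the effective dimension, that produces both the exponent $\gamma(1-q)+q$ in the variance term and the matching lower bound $\la\ge(\cdots)^{1/(\gamma(1-q)+1+q)}$. Everything else is a careful but by‑now‑routine reprise of the Nyström perturbation bookkeeping.
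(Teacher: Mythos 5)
Your proposal is correct in substance and relies on the same key ingredients as the paper: the reduction to $\nor{\Sigma^{1/2}(\wh f^w_{\la,m}-f_\H)}_\H$, a Bernstein inequality under a moment (rather than boundedness) condition obtained by combining Assumption~\ref{ass:bounded_moments} with a H\"older split against the effective dimension (this is exactly what Lemmas~18--19 of \cite{gogolashvili2023importance} deliver, and you correctly identify it as the genuinely new difficulty relative to \cite{rudi2015less}, producing the exponents $\gamma(1-q)+q$ and $\gamma(1-q)+1+q$), the concentration of the empirical effective dimension needed to make the ALS bound $\|(\I-P_m)\Sigma_\la^{1/2}\|\lesssim\sqrt\la$ usable, and the source condition for the $R\la^r$ bias term. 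Where you differ is the decomposition: you add and subtract the full-space estimator $\wh f^w_\la$, splitting into Nystr\"om + stochastic + bias, whereas the paper never introduces $\wh f^w_\la$ and instead writes $\wh f^w_{\la,m}-f_\H$ directly as a noise term $\mathbb{A}$ plus a term $\mathbb{B}=\nor{\Sigma^{1/2}(\Id-V(V^*\wh\Sigma_{w\la}V)^{-1}V^*\wh\Sigma_w)f_\H}$ that absorbs both the regularization bias and the projection error. The paper's route is the more economical one: bounding your Nystr\"om difference $\Sigma^{1/2}(\wh f^w_{\la,m}-\wh f^w_\la)$ to the claimed accuracy is not "standard propagation" --- it requires essentially the same machinery as $\mathbb{B}$ (the identity $\Id-V(\cdots)^{-1}V^*\wh\Sigma_w=(\Id-VV^*)+\la V(\cdots)^{-1}V^*-V(\cdots)^{-1}V^*\wh\Sigma_{w\la}(\Id-VV^*)$, the fact that $\nor{\wh\Sigma_{w\la}^{1/2}V(V^*\wh\Sigma_{w\la}V)^{-1}V^*\wh\Sigma_{w\la}^{1/2}}\le 1$, and the Cordes-inequality step $\nor{(\Id-VV^*)\Sigma_\la^{r-1/2}}\le\nor{(\Id-VV^*)\Sigma_\la^{1/2}}^{2r-1}$ that converts the $\sqrt\la$ projection bound into $\la^{r}$), so the intermediate estimator buys you nothing and leaves the hardest bookkeeping asserted rather than proved. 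This is the one soft spot in your argument; everything else, including the mean-zero identity $S^*_{\rho^{te}}(g^*-S_{\rho^{te}}f_\H)=0$ and the choice of the threshold for $\la$, matches the paper's proof.
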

The detailed proof of Theorem 1 can be found in Appendix~\ref{app: main proofs}. \\
If the weighting function $w$ is bounded, i.e. $\|w\|_\infty<\infty$, then Assumption~\ref{ass:bounded_moments} is satisfied for $q=0$. We specify Theorem~\ref{thm:risk bound} for this setting against the unbounded one.
\begin{corollary}
\label{cor: 1}
	Under the assumptions and conditions as in Theorem~\ref{thm:risk bound}, 
	\begin{enumerate}[(a)]
		\item with Assumption~\ref{ass:bounded_moments} satisfied by $q=0$
		and choosing $\la\asymp \left(\|w\|_\infty /n\right)^{\frac{1}{2r+\gamma}}$, with $m\gtrsim (n/\|w\|_\infty)^{\frac{\gamma }{2r+\gamma}} \log n$, with high probability 
		\begin{equation}
			\label{eq:rate q=0}
			\mathcal{E}(f_{\la, m}^{w}) =\| \wh f_{\la, m}^{w}-f_\H\|^2_{\rho^{te}_X}\lesssim \left(\frac{\|w\|_\infty}{n}\right)^{\frac{2r}{2r+\gamma}},
		\end{equation}
	\item with Assumption~\ref{ass:bounded_moments} satisfied by $q=1$
and choosing $\la\asymp \left(n/(W+\sigma^2)\right)^{-\frac{1}{2r+1}}$, with  $m\gtrsim (n/(W+\sigma^2))^{\frac{\gamma }{2r+1}} \log n$, with high probability
\begin{equation}
	\label{eq:rate q=1}
	\mathcal{E}(f_{\la, m}^{w}) =\| \wh f_{\la, m}^{w}-f_\H\|^2_{\rho^{te}_X}\lesssim \left(\frac 1{n}\right)^{\frac{2r}{2r+1}}.
\end{equation}
	\end{enumerate}
	
\end{corollary}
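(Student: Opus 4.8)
The plan is to treat Corollary~\ref{cor: 1} purely as a specialization of Theorem~\ref{thm:risk bound}: each rate follows by squaring the high-probability bound, fixing $q$, inserting the stated choice of $\la$, and checking that this $\la$ together with the stated $m$ meets the admissibility conditions of the theorem. First I would record the identity $\mathcal{E}(\wh f_{\la,m}^{w})=\mathcal{R}(\wh f_{\la,m}^{w})-\mathcal{R}(f_\H)=\|\wh f_{\la,m}^{w}-f_\H\|^2_{\rho^{te}_X}$, which holds because $f_\H$ is the $L^2_{\rho^{te}_X}$-projection of $g^*$ onto $\H$, so the cross term vanishes; thus squaring the bound of Theorem~\ref{thm:risk bound} directly controls the excess risk. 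I would also note that the variance denominator exponent $\gamma(1-q)+q$ collapses to $\gamma$ when $q=0$ and to $1$ when $q=1$, while the admissibility exponent $\gamma(1-q)+1+q$ becomes $\gamma+1$ and $2$ respectively.

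For part (a) I would first relate the constants to $\|w\|_\infty$: with $q=0$, Assumption~\ref{ass:bounded_moments} reduces to $\|w^{p-1}\|_{\infty,\rho^{te}_X}\le\tfrac12 p!\,W^{p-2}\sigma^2$, which is met by $W\asymp\sigma^2\asymp\|w\|_\infty$ (the case $p=2$ forces $\sigma^2\ge\|w\|_\infty$, and $W=\sigma^2=\|w\|_\infty$ then works since $\tfrac12 p!\ge1$). The bound then reads, up to constants and $\log(8/\delta)$, as $\mathcal{E}^{1/2}\lesssim\tfrac{\|w\|_\infty}{n\sqrt\la}+\sqrt{\tfrac{\|w\|_\infty}{n\la^{\gamma}}}+\la^{r}$. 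Setting the bias term $\la^{r}$ equal to the second variance term gives $\la^{2r+\gamma}=\|w\|_\infty/n$, i.e.\ $\la\asymp(\|w\|_\infty/n)^{1/(2r+\gamma)}$; both then equal $(\|w\|_\infty/n)^{r/(2r+\gamma)}$, whereas the first variance term carries the exponent $(4r+2\gamma-1)/(2(2r+\gamma))$, strictly larger than $r/(2r+\gamma)=2r/(2(2r+\gamma))$ once $r\ge1/2$, hence negligible. Squaring yields the rate in~\eqref{eq:rate q=0}.

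Part (b) proceeds identically with $q=1$, now treating $W,\sigma^2$ as fixed constants. Here the variance denominator is $\la$ rather than $\la^{\gamma}$, so balancing $\la^{r}$ against $\sqrt{(W+\sigma^2)/(n\la)}$ gives $\la^{2r+1}=(W+\sigma^2)/n$, i.e.\ $\la\asymp((W+\sigma^2)/n)^{1/(2r+1)}$; both dominant terms equal $((W+\sigma^2)/n)^{r/(2r+1)}$, the first variance term is again subdominant by the same $r\ge1/2$ comparison, and squaring gives~\eqref{eq:rate q=1} after absorbing $W+\sigma^2$ into the constant.

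The one point needing genuine care, and the main obstacle, is verifying that each chosen $\la$ lies in the admissible window $[\,(256(W+\sigma^2)\log^2(4/\delta)/n)^{1/(\gamma(1-q)+1+q)},\,\|\Sigma\|_{op}\,]$ and that the $m$-condition reproduces the stated threshold. The upper end is immediate since $\la\to0$ as $n$ grows. For the lower end one compares exponents: in case (a), $1/(2r+\gamma)\le1/(\gamma+1)$ \emph{precisely} because $r\ge1/2$, so the chosen $\la$ exceeds the threshold for $n$ large (the constant and logarithmic factors being absorbed into the ``high probability'' phrasing); case (b) is the identical comparison $1/(2r+1)\le1/2$. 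Finally, substituting $\la$ into $m\ge144T^2Q\la^{-\gamma}\log(8n/\delta)$ gives $\la^{-\gamma}\asymp(n/\|w\|_\infty)^{\gamma/(2r+\gamma)}$ in (a) and $(n/(W+\sigma^2))^{\gamma/(2r+1)}$ in (b), matching the stated conditions on $m$ up to the logarithmic factor.
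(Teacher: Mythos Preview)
Your proposal is correct and matches the paper's (implicit) approach: the paper does not give a separate proof of Corollary~\ref{cor: 1}, treating it as an immediate specialization of Theorem~\ref{thm:risk bound}, and your argument---fixing $q\in\{0,1\}$, identifying $W\asymp\sigma^2\asymp\|w\|_\infty$ in case~(a), balancing the dominant variance term against the bias $\la^{r}$, verifying the subdominance of the $W/(n\sqrt\la)$ term via $r\ge1/2$, and checking the admissibility window for $\la$ and the resulting $m$-threshold---is exactly the standard derivation one would expect. Your care with the orthogonality argument justifying $\mathcal{R}(\wh f_{\la,m}^{w})-\mathcal{R}(f_\H)=\|\wh f_{\la,m}^{w}-f_\H\|^2_{\rho^{te}_X}$ is also consistent with how the paper uses this identity at the start of the proof of Theorem~\ref{thm:risk bound}.
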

The rate in eq.~\eqref{eq:rate q=0} matches the optimal convergence rate of standard kernel ridge regression (KRR) established in \cite{caponnetto2007optimal}. However, it explicitly depends on $\|w\|_\infty$, which can become arbitrarily large as the training and test distributions diverge. This result also recovers Theorem 2.1 from \cite{myleiko2024regularized} for Tikhonov regularization in the special case $\gamma = 1$. Compared to their work, we consider ALS sampling, which enables fast rates under a suitable capacity condition. Furthermore, we extend the analysis to the case of unbounded importance weights. Specifically, eq.~\eqref{eq:rate q=1} shows that when the weighting function is unbounded (i.e., $q = 1$ in Assumption~\ref{ass:bounded_moments}), the convergence rate deteriorates to $\mathcal{O}(n^{-\frac{2r}{2r+1}})$. This slower rate, which does not depend on the capacity assumption~\ref{ass:capacity cond}), is always worse than the rate in eq.~\eqref{eq:rate q=0}. These findings are consistent with the results reported in \cite{gogolashvili2023importance} for the full (non-projected) model.\\
Note that the rate in eq.~\eqref{eq:rate q=1} can possibly be improved, even under our Nyström approximation, by clipping the unbounded IW function $w$ at a threshold that depends on $n$ (see Section 5.2.2 in \cite{gogolashvili2023importance}). This idea follows Corollary 2 of \cite{ma2023optimally}, although the result is not directly comparable due to differing assumptions.

\begin{example}
To illustrate the benefits of the Nyström approach, consider for example the common setting $q=0$, $r=1/2$ and $\gamma=1$. From eq.~\eqref{eq:rate q=0}, we achieve the optimal rate of $\mathcal{O}(n^{-1/2})$ \citep{caponnetto2007optimal}, with $m=\mathcal{O}(\sqrt{n}\log(n))$. This results in computational costs of $\mathcal{O}(m^3+m^2n)=\mathcal{O}(n\sqrt{n}+n^2)$ in time and $\mathcal{O}(mn)=\mathcal{O}(n\sqrt{n})$ in memory. These are significantly lower than the costs of the non-approximated method, respectively $\mathcal{O}(n^3)$ and $\mathcal{O}(n^2)$. 
\end{example}


\section{Unknown Weights}
\label{sec: unknown w}
Clearly, when dealing with real data, assuming knowledge of the exact weighting function $w$ is unrealistic.
Instead, if we have access to some samples from both distributions, we can attempt to estimate an approximate weighting function $v\approx w$ and control the error resulting from this approximation. 
Let us choose a function $v$ such that there exists a probability distribution $\rho_X^v$ with 
\begin{equation}
\label{eq:def_v}
v(x)=\frac{d\rho_X^v(x)}{d\rho_X^{tr}(x)},
\end{equation}
i.e. $v(x)$ is the Radon-Nikodym derivative of $\rho_X^{v}$ with respect to $\rho_X^{tr}$.
Note that, in general, $\rho_X^v\neq \rho_X^{te}$. Similarly to eq.~\eqref{eq:fH exists} in Assumption~\ref{ass:f_H exists}, we assume that
\begin{equation}
	f^v_\H=\argmin_{f \in \H} \mathcal{E}^v(f)=\|f-g^*\|^2_{\rho_X^{v}}
\end{equation}
exists (again we consider the unique with minimal norm in case of multiple minimizers). Furthermore, we assume that $v$ satisfies Assumptions~\ref{ass:bounded_moments},~\ref{ass:source_cond_davit},~\ref{ass:capacity cond} with constants $V,\eta,r,\gamma,Q$, respectively.
Since $v$ is chosen, weights $v(x)$ are known for all $x\in\X$, allowing us to compute the estimator
\begin{align}
\wh{f}^{v}_{\lambda,m}
&=V(V^*\wh S^* \wh M_v \wh S V+\la \Id)^{-1}V^*\wh S^*\wh M_v \wh y,
\end{align}
where $\wh M_v$ is the diagonal matrix with diagonal entries $v(x_1),\dots,v(x_n)$.
Still, we are interested in the excess risk of this estimator with respect to the original $\rho^{te}$ distribution.\\
Denoting $\Sigma=\Sigma_{\rho_X^{te}}$ and $\Sigma_{v}=\Sigma_{\rho_X^{v}}$, we decompose the excess risk of $\wh{f}^{v}_{\lambda,m}$ as:
\begin{align}
	\label{eq: splitting v}
  \mathcal{E}(f_{\la, m}^{v}) =
  \|\wh{f}^{v}_{\lambda,m}-f_{\H}\|^2_{\rho^{te}_X} 
	&\leq \|\Sigma\Sigma_{\la v}^{-1}\|_{op}\|\wh{f}^v_{\lambda,m}-f^v_{\H}\|^2_{\rho_X^v} +\|f^v_{\H}-f_\H\|^2_{\rho^{te}_X}.
\end{align}

\paragraph{Well Specified Case}
If $g^*\in\H$, then clearly $g^*=f_\H=f^v_\H$ and eq.~\eqref{eq: splitting v} simplifies to: 
\begin{align}
	\|\wh{f}^{v}_{\lambda,m}-f_{\H}\|^2_{\rho^{te}_X} 
	\leq \|\Sigma\Sigma_{\la v}^{-1}\|_{op}\|\wh{f}^v_{\lambda,m}-f_{\H}\|^2_{\rho_X^v}.
\end{align}
The second term corresponds to what we have already analyzed in Theorem~\ref{thm:risk bound}, with the distribution $\rho_X^{te}$ and its associated weighting function $w$ replaced by $\rho_X^{v}$ and $v$. The first term quantifies the additional cost incurred due to the mismatch between $\rho^{te}_X$ and $\rho^v_X$. Using Proposition~\ref{prop:ratio cov} from Appendix~\ref{app: known results}, we can show that if we set $v(x)\equiv 1$, i.e. $\rho_X^v\equiv\rho_X^{tr}$, 
then $\|\Sigma\Sigma_{\la v}^{-1}\|_{op}\leq\|w\|_\infty$. This indicates that, when $w$ is bounded, we have a finite control on the appearing term involving the two covariance operators. By setting $\la\asymp \left(\|w\|_\infty n\right)^{\frac{1}{2r+\gamma}}$, and ensuring $m\gtrsim n^{\frac{\gamma }{2r+\gamma}} \log n$, then, with high probability 
\begin{equation}
	\|\wh{f}^{v\equiv 1}_{\lambda,m}-f_{\H}\|^2_{\rho^{te}_X} \lesssim \|w\|_\infty \left(\frac{1}{n}\right)^{\frac{2r}{2r+\gamma}},
\end{equation}
where we used the fact that $\|v\|_\infty=1$.
It is important to note that, when compared with the bound in eq.~\eqref{eq:rate q=0}, which assumes knowledge of the true (typically unknown) weights, we achieve the same rate in $n$. This means that, when the model is well-specified, the classical Nystr\"om-ERM algorithm gives the same rate as the importance-weighted variant, despite the covariate shift between train and test distributions. This result agrees with findings in  \cite{ma2023optimally,gogolashvili2023importance}, where anyway random projection approximations are not involved. 
More than that, we emphasize that although the rate remains the same, the dependence on $\|w\|_\infty$, which is assumed finite but can be arbitrarily large, is worse than in eq.~\eqref{eq:rate q=0}, where the true $w$ is employed. 

\paragraph{Misspecified Case}
In the misspecified case, the situation is more complex since in general $g^*\neq f_\H\neq f^v_\H$. In particular, the term $\|f^v_{\H}-f_\H\|^2_{\rho^{te}_X}$ in eq.~\eqref{eq: splitting v} is not zero, and its magnitude can become arbitrarily large, depending on the severity of the mismatch between $\rho_X^{te}$ and $\rho_X^v$.


\section{Simulations and real data experiments}
\label{sec: experiments}
As emphasized in the introduction, the main goal of this study is to show that the Nyström method can deliver significant computational savings under covariate shift without compromising accuracy. 

\subsection{Simulations}
\begin{figure*}[h!] 
    \centering        \includegraphics[width=0.4\columnwidth]{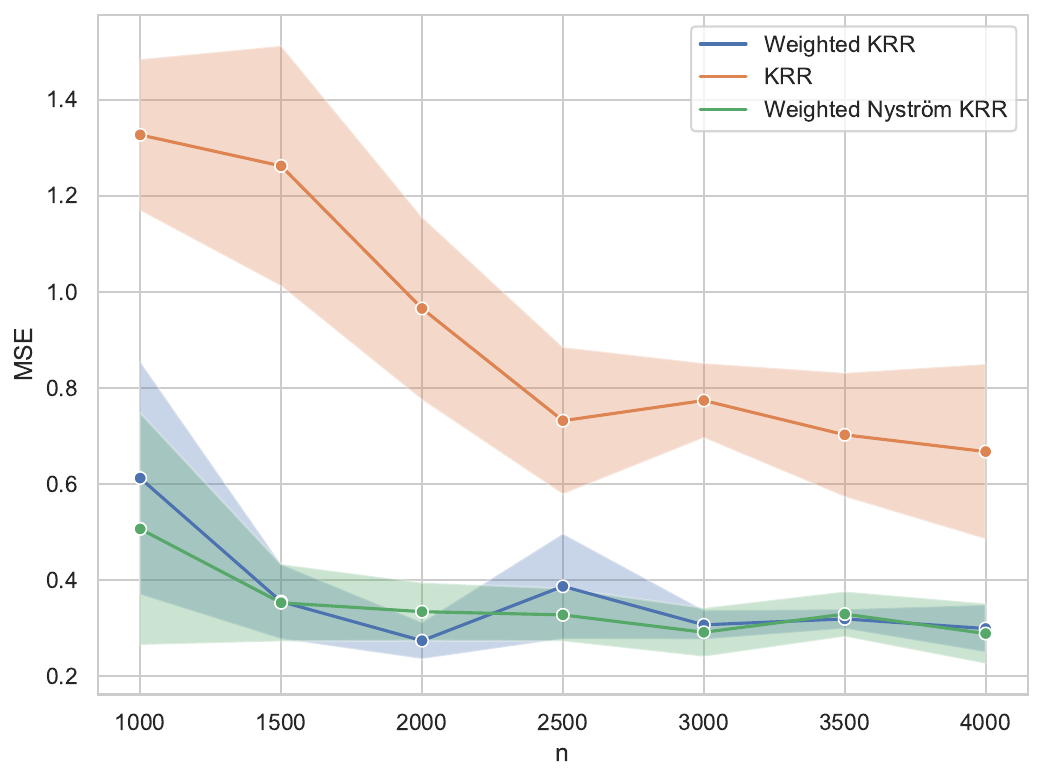} \hspace{0.5cm}
        \includegraphics[width=0.4\columnwidth]{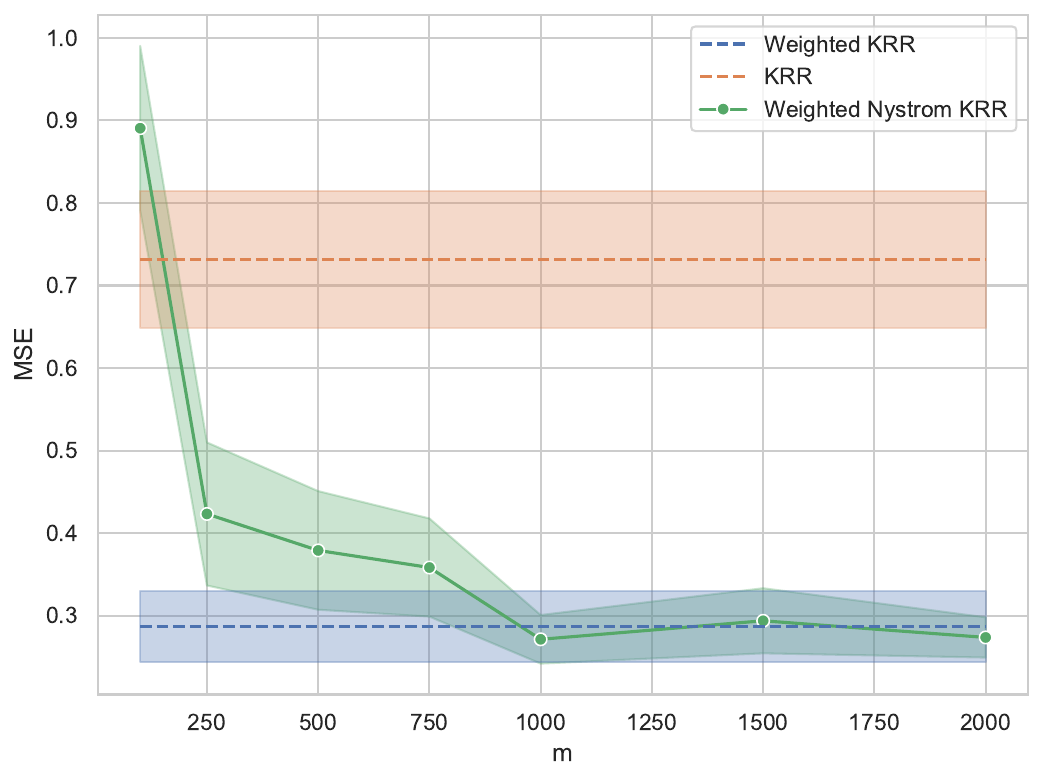} 
    \caption{On the left: MSE for the different models varying the number of train samples $n$. The result for W-Nystr\"om KRR model is obtained for optimal $m$. On the right: $n=3000$, optimal $m$ is selected as the smallest for which W-Nystr\"om KRR matches the full W-KRR model ($m=1000$ here).}
    \label{fig:davit}
\end{figure*}
We start by reproducing the experimental setting in \cite{gogolashvili2023importance}. We want to solve a regression problem using KRR with RBF kernel in the context of distribution shift, assuming $\rho^{tr}_X\sim \mathcal{N}(\mu_{tr},\Sigma_{tr})$, $\rho^{te}_X\sim \mathcal{N}(\mu_{te},\Sigma_{te})$ and $\mu_{tr}\neq\mu_{te}$, $\Sigma_{tr}\neq\Sigma_{te}$. The regression function is
$$g^*(x):=c_1e^{-\frac{c_2}{\|x\|_2^{2k}}}, \quad k\in \mathbb{N},\; c\in \mathbb{N},$$
where the parameter $k$ controls the level of misspecification. Note that, in fact, when $k$ increases, the regression function becomes essentially piece-wise constant, and neither constant nor discontinuous functions belong to the RKHS of the Gaussian kernel. 
Data samples are generated following
$
y_i^{tr}=g^*(x_i^{tr})+\xi_i,$ $ 
y_i^{te}=g^*(x_i^{te}),
$
with $x_i^{tr}\sim \rho^{tr}_X$, $x_i^{te}\sim \rho^{te}_X$ and $\xi_i\sim \mathcal{N}(0,\eps^2)$. 
Figure~\ref{fig:davit} shows the results in this setting for $k=50$.
In the plot on the left, we observe that the two weighted models, namely KRR with IW correction (W-KRR) and its Nyström-approximated version (W-Nyström KRR), perform similarly. As expected, the simple (unweighted) KRR model shows a performance gap. On the right, despite reaching the same error of the weighted KRR model, Nystr\"om approximation can lead to important computational savings allowing for choosing a number of Nystr\"om centers $m\ll n$.

\subsection{Experiments on Benchmark Datasets}

As regards real-world applications, we conduct experiments on commonly used benchmark datasets in the domain adaptation field \citep{wang2024domain, wilson2020multi, he2023domain, dinu2023addressing}. 
The size of the original datasets is reduced in case of memory issues with KRR and W-KRR when building the full Gram matrix $\hat{K}$ (see memory bottlenecks in Section~\ref{sec: nystrom}).
In these experiments, weights are estimated using 
RuLSIF method
\citep{yamada2013relative, liu2013change}. 
We consider 4 real-world datasets: HHAR \citep{stisen2015smart}, WISDM \citep{kwapisz2011activity}, HAR70+ \citep{ustad2023validation} and HARChildren \citep{torring2024validation}. 
To simulate covariate shift, we use data from one user for training and evaluate the models on data from a different user. We consider RBF kernel, with the length-scale $\gamma$ and regularization $\lambda$, tuned through cross-validation. For further details on the datasets and methodologies used, refer to Appendix~\ref{sec:Datasets}. 
\begin{table}[htbp]
\centering
\caption{Performances of the various methods, both in terms of MSE and training/prediction time.}
\label{tab:PPer}
\large
\renewcommand{\arraystretch}{1.3}
\setlength{\tabcolsep}{6pt}
\resizebox{\textwidth}{!}{
\begin{tabular}{@{}l*{3}{c@{~~}c@{~~}c}@{~~}c@{~~}c@{~~}c@{~~}c@{~~}c@{~~}c@{}}
\toprule
 & \multicolumn{3}{c}{HAR70+ $(n=20000)$} & \multicolumn{3}{c}{HARChildren $(n=15000)$} & \multicolumn{3}{c}{HHAR $(n=15500)$} & \multicolumn{3}{c}{WISDM $(n=25000)$} \\
\cmidrule(lr){2-4}\cmidrule(lr){5-7}\cmidrule(lr){8-10}\cmidrule(lr){11-13}
 & MSE & $t$ train (s) & $t$ pred (s) & MSE & $t$ train (s) & $t$ pred (s) & MSE & $t$ train (s) & $t$ pred (s) & MSE & $t$ train (s) & $t$ pred (s) \\
\midrule
KRR & $10\pm 1$ & $1694\pm 2$ & $15.0\pm 0.5$ & $26.6\pm 0.9$ & $762\pm 12$ & $10.2\pm 0.4$ & $3.7\pm 0.3$ & $876\pm 6$ & $10.5\pm 0.9$ & $7.8\pm 0.1$ & $3280\pm 48$ & $38\pm 5$ \\
W-KRR & $5.0\pm 0.2$ & $1785\pm 2$ & $15.1\pm 0.3$ & $13.5\pm 0.8$ & $809\pm 26$ & $9.0\pm 0.1$ & $1.8\pm 0.3$ & $1034\pm 93$ & $9.9\pm 0.1$ & $4.8\pm 0.2$ & $3364\pm 30$ & $33\pm 2$ \\
W-Ny KRR & $5.1\pm 0.2$ & $89\pm 19$ & $1.0\pm 0.1$ & $13.2\pm 0.6$ & $8.0\pm 0.2$ & $1.6\pm 0.1$ & $1.8\pm 0.1$ & $6.5\pm 0.4$ & $1.1\pm 0.1$ & $4.7\pm 0.3$ & $9.9\pm 0.4$ & $1.4\pm 0.1$ \\
\bottomrule
\end{tabular}
}
\end{table}

Table~\ref{tab:PPer} shows that the two methods using IW correction achieve the best and essentially equal performance. However, in terms of computational efficiency, our W-Nyström KRR method, offers significant time and memory savings. The number of Nyström points $m$ required by W-Nyström KRR is $1100$, $1800$, $1400$, and $1550$, for HAR70+, HARChildren, HHAR, and WISDM, respectively.

\section{Conclusions and Future Work}
\label{sec: conclusions}
In this paper, we showed that even under covariate shift, random projection techniques —particularly the Nyström method— can significantly enhance computational efficiency without any loss in learning performance. We provide new statistical bounds for our compressed Nyström algorithm, showing that it matches the optimal statistical guarantees of the full W-KRR model. Leveraging results from random projection theory, we developed novel technical proofs to account for the mismatch between training and test distributions and the potential unboundedness of the IW function. We evaluated the effectiveness of our approach through simulations and experiments on real-world datasets.\\
However, several questions remain open for future investigation. Although optimal rates are achieved in the well-specified case, the misalignment between the training and test distributions relative to the target function (see the \textit{source condition} in Assumption~\ref{ass:source_cond_davit}) appears to play a critical role empirically, as it can make covariate shift either benign or severely adversarial. A deeper understanding of this phenomenon may come from a more detailed analysis of the constants in the learning bounds and their influence on the overall rate (see eq.~\eqref{eq: splitting v} and the interaction between the covariance operators of source and target distributions).

\section*{Acknowledgments}	
L. R. acknowledges the financial support of the European Research Council (grant SLING 819789), the European Commission (Horizon Europe grant ELIAS 101120237), the US Air Force Office of Scientific Research (FA8655-22-1-7034), the Ministry of Education, University and Research (FARE grant ML4IP R205T7J2KP; grant BAC FAIR PE00000013 funded by the EU - NGEU) and the Center for Brains, Minds and Machines (CBMM). This work represents only the view of the authors. The European Commission and the other organizations are not responsible for any use that may be made of the information it contains. The research by Arnaud M. Watusadisi and L.R. was funded by a grant provided by the Liguria Region in collaboration with Leonardo SpA. The research by E.D.V has been supported by the MIUR grant PRIN 202244A7YL and by the MIUR Excellence Department Project awarded to Dipartimento di Matematica, Uni- versita di Genova, CUP D33C23001110001. E.D.V. is a member of the Gruppo Nazionale
per l’Analisi Matematica, la Probabilità e le loro Applicazioni (GNAMPA) of the Istituto Nazionale di Alta Matematica (INdAM).

\bibliography{bibl}

\begin{thebibliography}{71}
\providecommand{\natexlab}[1]{#1}
\providecommand{\url}[1]{\texttt{#1}}
\expandafter\ifx\csname urlstyle\endcsname\relax
  \providecommand{\doi}[1]{doi: #1}\else
  \providecommand{\doi}{doi: \begingroup \urlstyle{rm}\Url}\fi

\bibitem[Alaoui \& Mahoney(2015)Alaoui and Mahoney]{alaoui2015fast}
Alaoui, A. and Mahoney, M.~W.
\newblock Fast randomized kernel ridge regression with statistical guarantees.
\newblock In \emph{Advances in Neural Information Processing Systems}, pp.\  775--783, 2015.

\bibitem[Bach(2013)]{bach2013sharp}
Bach, F.
\newblock Sharp analysis of low-rank kernel matrix approximations.
\newblock In \emph{Conference on learning theory}, pp.\  185--209. PMLR, 2013.

\bibitem[Bach(2017)]{bach2017equivalence}
Bach, F.
\newblock On the equivalence between kernel quadrature rules and random feature expansions.
\newblock \emph{The Journal of Machine Learning Research}, 18\penalty0 (1):\penalty0 714--751, 2017.

\bibitem[Ben-David et~al.(2006)Ben-David, Blitzer, Crammer, and Pereira]{ben2006analysis}
Ben-David, S., Blitzer, J., Crammer, K., and Pereira, F.
\newblock Analysis of representations for domain adaptation.
\newblock \emph{Advances in neural information processing systems}, 19, 2006.

\bibitem[Ben-Israel \& Greville(2006)Ben-Israel and Greville]{ben2006generalized}
Ben-Israel, A. and Greville, T.~N.
\newblock \emph{Generalized inverses: theory and applications}.
\newblock Springer Science \& Business Media, 2006.

\bibitem[Bottou \& Bousquet(2008)Bottou and Bousquet]{bottou2008tradeoffs}
Bottou, L. and Bousquet, O.
\newblock The tradeoffs of large scale learning.
\newblock In \emph{Advances in neural information processing systems}, pp.\  161--168, 2008.

\bibitem[Boucheron et~al.(2013)Boucheron, Lugosi, and Massart]{boucheron2013concentration}
Boucheron, S., Lugosi, G., and Massart, P.
\newblock \emph{Concentration Inequalities: A Nonasymptotic Theory of Independence}.
\newblock Oxford University Press, Oxford, 2013.

\bibitem[Calandriello et~al.(2017)Calandriello, Lazaric, and Valko]{calandriello2017distributed}
Calandriello, D., Lazaric, A., and Valko, M.
\newblock Distributed adaptive sampling for kernel matrix approximation.
\newblock In \emph{Artificial Intelligence and Statistics}, pp.\  1421--1429. PMLR, 2017.

\bibitem[Caponnetto \& De~Vito(2007)Caponnetto and De~Vito]{caponnetto2007optimal}
Caponnetto, A. and De~Vito, E.
\newblock Optimal rates for the regularized least-squares algorithm.
\newblock \emph{Foundations of Computational Mathematics}, 7:\penalty0 331--368, 2007.

\bibitem[Cohen et~al.(2015)Cohen, Lee, Musco, Musco, Peng, and Sidford]{cohen2015uniform}
Cohen, M.~B., Lee, Y.~T., Musco, C., Musco, C., Peng, R., and Sidford, A.
\newblock Uniform sampling for matrix approximation.
\newblock In \emph{Proceedings of the 2015 conference on innovations in theoretical computer science}, pp.\  181--190, 2015.

\bibitem[Cortes \& Mohri(2014)Cortes and Mohri]{cortes2014domain}
Cortes, C. and Mohri, M.
\newblock Domain adaptation and sample bias correction theory and algorithm for regression.
\newblock \emph{Theoretical Computer Science}, 519:\penalty0 103--126, 2014.

\bibitem[Cortes et~al.(2008)Cortes, Mohri, Riley, and Rostamizadeh]{cortes2008sample}
Cortes, C., Mohri, M., Riley, M., and Rostamizadeh, A.
\newblock Sample selection bias correction theory.
\newblock In \emph{International conference on algorithmic learning theory}, pp.\  38--53. Springer, 2008.

\bibitem[Cortes et~al.(2010)Cortes, Mansour, and Mohri]{cortes2010learning}
Cortes, C., Mansour, Y., and Mohri, M.
\newblock Learning bounds for importance weighting.
\newblock \emph{Advances in neural information processing systems}, 23, 2010.

\bibitem[De~Vito et~al.(2021)De~Vito, Rosasco, and Rudi]{de2021regularization}
De~Vito, E., Rosasco, L., and Rudi, A.
\newblock Regularization: From inverse problems to large-scale machine learning.
\newblock \emph{Harmonic and Applied Analysis: From Radon Transforms to Machine Learning}, pp.\  245--296, 2021.

\bibitem[Della~Vecchia et~al.(2021)Della~Vecchia, Mourtada, De~Vito, and Rosasco]{della2021regularized}
Della~Vecchia, A., Mourtada, J., De~Vito, E., and Rosasco, L.
\newblock Regularized erm on random subspaces.
\newblock In \emph{International Conference on Artificial Intelligence and Statistics}, pp.\  4006--4014. PMLR, 2021.

\bibitem[Della~Vecchia et~al.(2024)Della~Vecchia, De~Vito, Mourtada, and Rosasco]{della2024nystrom}
Della~Vecchia, A., De~Vito, E., Mourtada, J., and Rosasco, L.
\newblock The nystr{\"o}m method for convex loss functions.
\newblock \emph{Journal of Machine Learning Research}, 25\penalty0 (360):\penalty0 1--60, 2024.

\bibitem[Dinu et~al.(2023)Dinu, Holzleitner, Beck, Nguyen, Huber, Eghbal-zadeh, Moser, Pereverzyev, Hochreiter, and Zellinger]{dinu2023addressing}
Dinu, M.-C., Holzleitner, M., Beck, M., Nguyen, H.~D., Huber, A., Eghbal-zadeh, H., Moser, B.~A., Pereverzyev, S., Hochreiter, S., and Zellinger, W.
\newblock Addressing parameter choice issues in unsupervised domain adaptation by aggregation.
\newblock In \emph{The Eleventh International Conference on Learning Representations}, 2023.
\newblock URL \url{https://openreview.net/forum?id=M95oDwJXayG}.

\bibitem[Drineas \& Mahoney(2005)Drineas and Mahoney]{drineas2005nystrom}
Drineas, P. and Mahoney, M.~W.
\newblock On the nystr{\"o}m method for approximating a gram matrix for improved kernel-based learning.
\newblock \emph{journal of machine learning research}, 6\penalty0 (Dec):\penalty0 2153--2175, 2005.

\bibitem[Drineas et~al.(2012)Drineas, Magdon-Ismail, Mahoney, and Woodruff]{drineas2012fast}
Drineas, P., Magdon-Ismail, M., Mahoney, M.~W., and Woodruff, D.~P.
\newblock Fast approximation of matrix coherence and statistical leverage.
\newblock \emph{The Journal of Machine Learning Research}, 13\penalty0 (1):\penalty0 3475--3506, 2012.

\bibitem[Fang et~al.(2020)Fang, Lu, Niu, and Sugiyama]{fang2020rethinking}
Fang, T., Lu, N., Niu, G., and Sugiyama, M.
\newblock Rethinking importance weighting for deep learning under distribution shift.
\newblock \emph{Advances in neural information processing systems}, 33:\penalty0 11996--12007, 2020.

\bibitem[Fujii et~al.(1993)Fujii, Fujii, Furuta, and Nakamoto]{fujii1993norm}
Fujii, J., Fujii, M., Furuta, T., and Nakamoto, R.
\newblock Norm inequalities equivalent to heinz inequality.
\newblock \emph{Proceedings of the American Mathematical Society}, 118\penalty0 (3):\penalty0 827--830, 1993.

\bibitem[Gizewski et~al.(2022)Gizewski, Mayer, Moser, Nguyen, Pereverzyev~Jr, Pereverzyev, Shepeleva, and Zellinger]{gizewski2022regularization}
Gizewski, E.~R., Mayer, L., Moser, B.~A., Nguyen, D.~H., Pereverzyev~Jr, S., Pereverzyev, S.~V., Shepeleva, N., and Zellinger, W.
\newblock On a regularization of unsupervised domain adaptation in rkhs.
\newblock \emph{Applied and Computational Harmonic Analysis}, 57:\penalty0 201--227, 2022.

\bibitem[Gogolashvili et~al.(2023)Gogolashvili, Zecchin, Kanagawa, Kountouris, and Filippone]{gogolashvili2023importance}
Gogolashvili, D., Zecchin, M., Kanagawa, M., Kountouris, M., and Filippone, M.
\newblock When is importance weighting correction needed for covariate shift adaptation?
\newblock \emph{arXiv preprint arXiv:2303.04020}, 2023.

\bibitem[Guan \& Liu(2021)Guan and Liu]{guan2021domain}
Guan, H. and Liu, M.
\newblock Domain adaptation for medical image analysis: a survey.
\newblock \emph{IEEE Transactions on Biomedical Engineering}, 69\penalty0 (3):\penalty0 1173--1185, 2021.

\bibitem[He et~al.(2023)He, Queen, Koker, Cuevas, Tsiligkaridis, and Zitnik]{he2023domain}
He, H., Queen, O., Koker, T., Cuevas, C., Tsiligkaridis, T., and Zitnik, M.
\newblock Domain adaptation for time series under feature and label shifts.
\newblock In \emph{International Conference on Machine Learning}, pp.\  12746--12774. PMLR, 2023.

\bibitem[Huang et~al.(2006)Huang, Gretton, Borgwardt, Sch{\"o}lkopf, and Smola]{huang2006correcting}
Huang, J., Gretton, A., Borgwardt, K., Sch{\"o}lkopf, B., and Smola, A.
\newblock Correcting sample selection bias by unlabeled data.
\newblock \emph{Advances in neural information processing systems}, 19, 2006.

\bibitem[Jiang \& Zhai(2007)Jiang and Zhai]{jiang2007instance}
Jiang, J. and Zhai, C.
\newblock Instance weighting for domain adaptation in nlp.
\newblock ACL, 2007.

\bibitem[Johnson \& Zhang(2013)Johnson and Zhang]{johnson2013accelerating}
Johnson, R. and Zhang, T.
\newblock Accelerating stochastic gradient descent using predictive variance reduction.
\newblock In \emph{Advances in neural information processing systems}, pp.\  315--323, 2013.

\bibitem[Kanamori \& Shimodaira(2003)Kanamori and Shimodaira]{kanamori2003active}
Kanamori, T. and Shimodaira, H.
\newblock Active learning algorithm using the maximum weighted log-likelihood estimator.
\newblock \emph{Journal of statistical planning and inference}, 116\penalty0 (1):\penalty0 149--162, 2003.

\bibitem[Koh et~al.(2021)Koh, Sagawa, Marklund, Xie, Zhang, Balsubramani, Hu, Yasunaga, Phillips, Gao, et~al.]{koh2021wilds}
Koh, P.~W., Sagawa, S., Marklund, H., Xie, S.~M., Zhang, M., Balsubramani, A., Hu, W., Yasunaga, M., Phillips, R.~L., Gao, I., et~al.
\newblock Wilds: A benchmark of in-the-wild distribution shifts.
\newblock In \emph{International conference on machine learning}, pp.\  5637--5664. PMLR, 2021.

\bibitem[Kpotufe \& Martinet(2021)Kpotufe and Martinet]{kpotufe2021marginal}
Kpotufe, S. and Martinet, G.
\newblock Marginal singularity and the benefits of labels in covariate-shift.
\newblock \emph{The Annals of Statistics}, 49\penalty0 (6):\penalty0 3299--3323, 2021.

\bibitem[Kpotufe \& Sriperumbudur(2019)Kpotufe and Sriperumbudur]{kpotufe2019kernel}
Kpotufe, S. and Sriperumbudur, B.~K.
\newblock Kernel sketching yields kernel jl.
\newblock \emph{arXiv preprint arXiv:1908.05818}, 2019.

\bibitem[Kwapisz et~al.(2011)Kwapisz, Weiss, and Moore]{kwapisz2011activity}
Kwapisz, J.~R., Weiss, G.~M., and Moore, S.~A.
\newblock Activity recognition using cell phone accelerometers.
\newblock \emph{ACM SigKDD Explorations Newsletter}, 12\penalty0 (2):\penalty0 74--82, 2011.

\bibitem[Lei et~al.(2021)Lei, Hu, and Lee]{lei2021near}
Lei, Q., Hu, W., and Lee, J.
\newblock Near-optimal linear regression under distribution shift.
\newblock In \emph{International Conference on Machine Learning}, pp.\  6164--6174. PMLR, 2021.

\bibitem[Liu et~al.(2013)Liu, Yamada, Collier, and Sugiyama]{liu2013change}
Liu, S., Yamada, M., Collier, N., and Sugiyama, M.
\newblock Change-point detection in time-series data by relative density-ratio estimation.
\newblock \emph{Neural Networks}, 43:\penalty0 72--83, 2013.

\bibitem[Ma et~al.(2023)Ma, Pathak, and Wainwright]{ma2023optimally}
Ma, C., Pathak, R., and Wainwright, M.~J.
\newblock Optimally tackling covariate shift in rkhs-based nonparametric regression.
\newblock \emph{The Annals of Statistics}, 51\penalty0 (2):\penalty0 738--761, 2023.

\bibitem[Mahoney(2011)]{mahoney2011randomized}
Mahoney, M.~W.
\newblock Randomized algorithms for matrices and data.
\newblock \emph{Foundations and Trends{\textregistered} in Machine Learning}, 3\penalty0 (2):\penalty0 123--224, 2011.

\bibitem[Mansour et~al.(2009)Mansour, Mohri, and Rostamizadeh]{mansour2009domain}
Mansour, Y., Mohri, M., and Rostamizadeh, A.
\newblock Domain adaptation: Learning bounds and algorithms.
\newblock \emph{arXiv preprint arXiv:0902.3430}, 2009.

\bibitem[Marteau-Ferey et~al.(2019)Marteau-Ferey, Ostrovskii, Bach, and Rudi]{marteau2019beyond}
Marteau-Ferey, U., Ostrovskii, D., Bach, F., and Rudi, A.
\newblock Beyond least-squares: Fast rates for regularized empirical risk minimization through self-concordance.
\newblock In \emph{Conference on Learning Theory}, pp.\  2294--2340. PMLR, 2019.

\bibitem[Meanti et~al.(2020)Meanti, Carratino, Rosasco, and Rudi]{meanti2020kernel}
Meanti, G., Carratino, L., Rosasco, L., and Rudi, A.
\newblock Kernel methods through the roof: handling billions of points efficiently.
\newblock \emph{Advances in Neural Information Processing Systems}, 33:\penalty0 14410--14422, 2020.

\bibitem[Myleiko \& Solodky(2024)Myleiko and Solodky]{myleiko2024regularized}
Myleiko, H.~L. and Solodky, S.~G.
\newblock Regularized nystr{\"o}m subsampling in covariate shift domain adaptation problems.
\newblock \emph{Numerical Functional Analysis and Optimization}, 45\penalty0 (3):\penalty0 165--188, 2024.

\bibitem[Pathak et~al.(2022)Pathak, Ma, and Wainwright]{pathak2022new}
Pathak, R., Ma, C., and Wainwright, M.
\newblock A new similarity measure for covariate shift with applications to nonparametric regression.
\newblock In \emph{International Conference on Machine Learning}, pp.\  17517--17530. PMLR, 2022.

\bibitem[Qui{\~n}onero-Candela et~al.(2022)Qui{\~n}onero-Candela, Sugiyama, Schwaighofer, and Lawrence]{quinonero2022dataset}
Qui{\~n}onero-Candela, J., Sugiyama, M., Schwaighofer, A., and Lawrence, N.~D.
\newblock \emph{Dataset shift in machine learning}.
\newblock Mit Press, 2022.

\bibitem[Rudi \& Rosasco(2017)Rudi and Rosasco]{rudi2017generalization}
Rudi, A. and Rosasco, L.
\newblock Generalization properties of learning with random features.
\newblock In \emph{Advances in Neural Information Processing Systems 30}, pp.\  3215--3225, 2017.

\bibitem[Rudi et~al.(2015)Rudi, Camoriano, and Rosasco]{rudi2015less}
Rudi, A., Camoriano, R., and Rosasco, L.
\newblock Less is more: Nystr\"om computational regularization.
\newblock \emph{arXiv preprint arXiv:1507.04717}, 28, 2015.

\bibitem[Rudi et~al.(2017)Rudi, Carratino, and Rosasco]{rudi2017falkon}
Rudi, A., Carratino, L., and Rosasco, L.
\newblock Falkon: An optimal large scale kernel method.
\newblock In \emph{Advances in Neural Information Processing Systems}, pp.\  3888--3898, 2017.

\bibitem[Rudi et~al.(2018)Rudi, Calandriello, Carratino, and Rosasco]{rudi2018fast}
Rudi, A., Calandriello, D., Carratino, L., and Rosasco, L.
\newblock On fast leverage score sampling and optimal learning.
\newblock In \emph{Advances in Neural Information Processing Systems}, pp.\  5672--5682, 2018.

\bibitem[Schmidt et~al.(2017)Schmidt, Le~Roux, and Bach]{schmidt2017minimizing}
Schmidt, M., Le~Roux, N., and Bach, F.
\newblock Minimizing finite sums with the stochastic average gradient.
\newblock \emph{Mathematical Programming}, 162\penalty0 (1-2):\penalty0 83--112, 2017.

\bibitem[Schmidt-Hieber \& Zamolodtchikov(2024)Schmidt-Hieber and Zamolodtchikov]{schmidt2024local}
Schmidt-Hieber, J. and Zamolodtchikov, P.
\newblock Local convergence rates of the nonparametric least squares estimator with applications to transfer learning.
\newblock \emph{Bernoulli}, 30\penalty0 (3):\penalty0 1845--1877, 2024.

\bibitem[Shimodaira(2000)]{shimodaira2000improving}
Shimodaira, H.
\newblock Improving predictive inference under covariate shift by weighting the log-likelihood function.
\newblock \emph{Journal of statistical planning and inference}, 90\penalty0 (2):\penalty0 227--244, 2000.

\bibitem[Smale \& Zhou(2007)Smale and Zhou]{smale2007learning}
Smale, S. and Zhou, D.-X.
\newblock Learning theory estimates via integral operators and their approximations.
\newblock \emph{Constructive approximation}, 26\penalty0 (2):\penalty0 153--172, 2007.

\bibitem[Smola \& Sch{\"o}kopf(2000)Smola and Sch{\"o}kopf]{smola2000sparse}
Smola, A.~J. and Sch{\"o}kopf, B.
\newblock Sparse greedy matrix approximation for machine learning.
\newblock In \emph{Proceedings of the seventeenth international conference on machine learning}, pp.\  911--918, 2000.

\bibitem[Stisen et~al.(2015)Stisen, Blunck, Bhattacharya, Prentow, Kj{\ae}rgaard, Dey, Sonne, and Jensen]{stisen2015smart}
Stisen, A., Blunck, H., Bhattacharya, S., Prentow, T.~S., Kj{\ae}rgaard, M.~B., Dey, A., Sonne, T., and Jensen, M.~M.
\newblock Smart devices are different: Assessing and mitigatingmobile sensing heterogeneities for activity recognition.
\newblock In \emph{Proceedings of the 13th ACM conference on embedded networked sensor systems}, pp.\  127--140, 2015.

\bibitem[Sugiyama \& Kawanabe(2012)Sugiyama and Kawanabe]{sugiyama2012machine}
Sugiyama, M. and Kawanabe, M.
\newblock \emph{Machine learning in non-stationary environments: Introduction to covariate shift adaptation}.
\newblock MIT press, 2012.

\bibitem[Sugiyama \& Ridgeway(2006)Sugiyama and Ridgeway]{sugiyama2006active}
Sugiyama, M. and Ridgeway, G.
\newblock Active learning in approximately linear regression based on conditional expectation of generalization error.
\newblock \emph{Journal of Machine Learning Research}, 7\penalty0 (1), 2006.

\bibitem[Sugiyama et~al.(2012)Sugiyama, Suzuki, and Kanamori]{sugiyama2012density}
Sugiyama, M., Suzuki, T., and Kanamori, T.
\newblock \emph{Density ratio estimation in machine learning}.
\newblock Cambridge University Press, 2012.

\bibitem[Sun et~al.(2018)Sun, Gilbert, and Tewari]{sun2018but}
Sun, Y., Gilbert, A., and Tewari, A.
\newblock But how does it work in theory? linear svm with random features.
\newblock In \emph{Advances in Neural Information Processing Systems}, pp.\  3379--3388, 2018.

\bibitem[T{\o}rring et~al.(2024)T{\o}rring, Logacjov, Br{\ae}ndvik, Ustad, Roeleveld, and Bardal]{torring2024validation}
T{\o}rring, M.~F., Logacjov, A., Br{\ae}ndvik, S.~M., Ustad, A., Roeleveld, K., and Bardal, E.~M.
\newblock Validation of two novel human activity recognition models for typically developing children and children with cerebral palsy.
\newblock \emph{Plos one}, 19\penalty0 (9):\penalty0 e0308853, 2024.

\bibitem[Ustad et~al.(2023)Ustad, Logacjov, Trolleb{\o}, Thingstad, Vereijken, Bach, and Maroni]{ustad2023validation}
Ustad, A., Logacjov, A., Trolleb{\o}, S.~{\O}., Thingstad, P., Vereijken, B., Bach, K., and Maroni, N.~S.
\newblock Validation of an activity type recognition model classifying daily physical behavior in older adults: the har70+ model.
\newblock \emph{Sensors}, 23\penalty0 (5):\penalty0 2368, 2023.

\bibitem[Vapnik(1999)]{vapnik1999nature}
Vapnik, V.
\newblock \emph{The Nature of Statistical Learning Theory}.
\newblock Springer Science \& Business Media, 1999.

\bibitem[Wang \& Sun(2024)Wang and Sun]{wang2024domain}
Wang, X. and Sun, R.
\newblock Domain adaptation of time series classification.
\newblock \emph{IEEE Access}, 2024.

\bibitem[Wen et~al.(2014)Wen, Yu, and Greiner]{wen2014robust}
Wen, J., Yu, C.-N., and Greiner, R.
\newblock Robust learning under uncertain test distributions: Relating covariate shift to model misspecification.
\newblock In \emph{International Conference on Machine Learning}, pp.\  631--639. PMLR, 2014.

\bibitem[Wiens(2000)]{wiens2000robust}
Wiens, D.~P.
\newblock Robust weights and designs for biased regression models: Least squares and generalized m-estimation.
\newblock \emph{Journal of Statistical Planning and Inference}, 83\penalty0 (2):\penalty0 395--412, 2000.

\bibitem[Williams \& Seeger(2000)Williams and Seeger]{williams2000using}
Williams, C. and Seeger, M.
\newblock Using the nystr{\"o}m method to speed up kernel machines.
\newblock \emph{Advances in neural information processing systems}, 13, 2000.

\bibitem[Williams \& Seeger(2001)Williams and Seeger]{williams2001using}
Williams, C.~K. and Seeger, M.
\newblock Using the nystr{\"o}m method to speed up kernel machines.
\newblock In \emph{Advances in neural information processing systems}, pp.\  682--688, 2001.

\bibitem[Wilson et~al.(2020)Wilson, Doppa, and Cook]{wilson2020multi}
Wilson, G., Doppa, J.~R., and Cook, D.~J.
\newblock Multi-source deep domain adaptation with weak supervision for time-series sensor data.
\newblock In \emph{Proceedings of the 26th ACM SIGKDD international conference on knowledge discovery \& data mining}, pp.\  1768--1778, 2020.

\bibitem[Woodruff(2014)]{woodruff2014sketching}
Woodruff, D.~P.
\newblock Sketching as a tool for numerical linear algebra.
\newblock \emph{arXiv preprint arXiv:1411.4357}, 2014.

\bibitem[Yamada et~al.(2013)Yamada, Suzuki, Kanamori, Hachiya, and Sugiyama]{yamada2013relative}
Yamada, M., Suzuki, T., Kanamori, T., Hachiya, H., and Sugiyama, M.
\newblock Relative density-ratio estimation for robust distribution comparison.
\newblock \emph{Neural computation}, 25\penalty0 (5):\penalty0 1324--1370, 2013.

\bibitem[Yamazaki et~al.(2007)Yamazaki, Kawanabe, Watanabe, Sugiyama, and M{\"u}ller]{yamazaki2007asymptotic}
Yamazaki, K., Kawanabe, M., Watanabe, S., Sugiyama, M., and M{\"u}ller, K.-R.
\newblock Asymptotic bayesian generalization error when training and test distributions are different.
\newblock In \emph{Proceedings of the 24th international conference on Machine learning}, pp.\  1079--1086, 2007.

\bibitem[Zhang et~al.(2012)Zhang, Zhang, and Ye]{zhang2012generalization}
Zhang, C., Zhang, L., and Ye, J.
\newblock Generalization bounds for domain adaptation.
\newblock \emph{Advances in neural information processing systems}, 25, 2012.

\bibitem[Zhang(2005)]{zhang2005learning}
Zhang, T.
\newblock Learning bounds for kernel regression using effective data dimensionality.
\newblock \emph{Neural Computation}, 17\penalty0 (9):\penalty0 2077--2098, 2005.

\end{thebibliography}
\bibliographystyle{icml2025}

\appendix
\onecolumn
\section{Derivation of the estimators}\label{sec:derivation}
We define some quantities we will need when deriving our estimators.
Following Section \ref{sec: notation}, we define the sampling operator $\wh Z_m:\H \to \R^m$ associated with the subset $\{\wt  x_1, \dots, \wt  x_m\}\subset \{ x_1, \dots,  x_n\}$ as
\begin{equation}
     \wh Z_{m} : \H_m \rightarrow \mathbb{R}^m,\ \ \ (\wh Z_{m}f) = \langle f,K_{\widetilde{x}_i}\rangle_{\mathcal{H}_m},
\end{equation}
with its adjoint
\begin{equation}
       \wh Z^{*}_{m} : \mathbb{R}^m \rightarrow \H_m  ,\ \ \ \wh Z^{*}_{m}\Tilde{c}= \sum_{i=1}^{m}\widetilde{c}_i K_{\widetilde{x}_i},\ \forall \widetilde{c} \in \mathbb{R}^m.
       \label{eqn:eqaaooo}
\end{equation}
Moreover, consider the singular value decomposition (SVD) of $\wh Z_{m}$ and $\wh Z^*_{m}$
\begin{equation*}
   \wh Z_{m} = UD V^*,\qquad \wh Z^*_m = VDU^*
\end{equation*}
with $U : \mathbb{R}^k \rightarrow \mathbb{R}^m$, $D : \mathbb{R}^k \rightarrow \mathbb{R}^k$ the diagonal matrix of singular values sorted in non-decreasing order $D = \diag(\sigma_1,...,\sigma_t)$ with $\sigma_1 \geq ... \geq \sigma_k > 0$, $V : \mathbb{R}^k \rightarrow \widetilde{\mathcal{H}}_m$, $k \leq m$ such that $U^*U = I_k$ and $V^*V = I_k$. The projection operator with range $\H_m$ is given by $P_m = VV^*$. We used this fact in Section~\ref{sec: nystrom}.
We derive an expression for the KRR minimizer when considering IW correction and Nystr\"om approximation.
\begin{lemma}[Nystr\"om IW-KRR estimator] Given the minimization problem in~\eqref{sec:eq_14}, the unique minimizer can be written as
\begin{equation}
\wh f_{\la, m}^{w} = V(V^*\wh S^* \wh M_w \wh S V+\la \Id)^{-1}V^*\wh S^*\wh M_w \wh y
 \label{eq:nyst_minim}
\end{equation}
where $\lambda > 0$, $\wh y = (y_1,...,y_n)^T$ and the matrix $\wh M_{w} = \diag(w(x_1),...,w(x_n))$.
\end{lemma}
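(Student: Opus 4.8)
The plan is to proceed by a standard variational argument for regularized least-squares in a Hilbert space, reformulated so that the minimization over the finite-dimensional subspace $\H_m$ is turned into an unconstrained minimization over $\H$ via the projection $P_m = VV^*$. First I would rewrite the objective in~\eqref{sec:eq_14}. Using the sampling operator $\wh S$ and the diagonal matrix $\wh M_w$, the empirical importance-weighted risk is
\begin{equation*}
\wh{\mathcal R}^w_\la(\beta) = \frac1n\,\|\wh M_w^{1/2}(\wh y - \wh S\beta)\|_2^2 + \la\|\beta\|_\H^2 .
\end{equation*}
Writing $\beta = Vu$ for $u$ in the $k$-dimensional coordinate space (with $V^*V = I_k$, so $\|Vu\|_\H = \|u\|$), the objective becomes a quadratic in $u$, namely $\frac1n\|\wh M_w^{1/2}(\wh y - \wh S V u)\|_2^2 + \la\|u\|^2$. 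Equivalently, since $P_m = VV^*$ is the orthogonal projection onto $\H_m$ and $\beta \in \H_m$ iff $P_m\beta = \beta$, the constrained problem over $\H_m$ coincides with minimizing $\frac1n\|\wh M_w^{1/2}(\wh y - \wh S P_m f)\|_2^2 + \la\|f\|_\H^2$ over all $f\in\H$; this is the second expression in~\eqref{sec:eq_14}.

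Next I would compute the first-order stationarity condition. The map $u \mapsto \frac1n\|\wh M_w^{1/2}(\wh y - \wh S V u)\|_2^2 + \la\|u\|^2$ is strictly convex (the regularization term makes the Hessian $\frac1n V^*\wh S^*\wh M_w\wh S V + \la I_k$ positive definite, since $\wh M_w\succeq 0$ and $\la>0$), so it has a unique minimizer characterized by vanishing gradient. Setting the gradient to zero gives
\begin{equation*}
\Big(\tfrac1n V^*\wh S^*\wh M_w\wh S V + \la I_k\Big)u = \tfrac1n V^*\wh S^*\wh M_w\wh y ,
\end{equation*}
hence $u = (V^*\wh S^*\wh M_w\wh S V + n\la I_k)^{-1}V^*\wh S^*\wh M_w\wh y$ after clearing the factor $1/n$ (absorbing it, or equivalently rescaling $\la$ consistently with the statement's normalization). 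Multiplying on the left by $V$ yields
\begin{equation*}
\wh f_{\la,m}^w = Vu = V(V^*\wh S^*\wh M_w\wh S V + \la\Id)^{-1}V^*\wh S^*\wh M_w\wh y,
\end{equation*}
which is~\eqref{eq:nyst_minim}. Uniqueness of $\wh f_{\la,m}^w$ as an element of $\H_m$ follows because $V$ is injective on the coordinate space (its columns are orthonormal) and $u$ is unique; so even though $V$ is not unique as an SVD factor, the resulting $Vu = P_m(\cdot)$-image is well defined and independent of the choice.

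I don't expect a serious obstacle here — the result is essentially the matrix representer theorem adapted to a random subspace — but the one point requiring care is bookkeeping between the two equivalent forms: the constrained minimization $\min_{\beta\in\H_m}$ versus the unconstrained $\min_{f\in\H}$ with $P_m$ inserted. One must check that a minimizer of the latter, after applying $P_m$, solves the former, and that the $\la\|f\|_\H^2$ term behaves correctly (the minimizer of the unconstrained problem automatically lies in $\H_m$ because any component orthogonal to $\H_m$ only increases $\|f\|_\H^2$ without changing $\wh S P_m f$). A second minor point is the normalization of $\la$: the $1/n$ in front of the data term means the displayed estimator implicitly uses $\la \leftrightarrow n\la$ relative to the naive derivation, so I would state the derivation in a way consistent with eq.~\eqref{eq: Nystrom estimator} and eq.~\eqref{eq: Nystrom estimator_repr}, and note the equivalence with the dual form $\wt c^w = (\wh K_{nm}^T\wh M_w\wh K_{nm} + n\la\wh K_{mm})^{-1}\wh K_{nm}^T\wh M_w\wh y$ by substituting $\wh S V$ and $\wh S^*$ in terms of the Gram matrices $\wh K_{nm}$, $\wh K_{mm}$.
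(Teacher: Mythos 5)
Your proof is correct and follows essentially the same route as the paper's: first-order stationarity of the strictly convex regularized quadratic, with the subspace handled via $P_m = VV^*$ (you parametrize by coordinates $u$ with $\beta = Vu$ and then left-multiply by $V$, while the paper writes the stationarity condition directly in $\H$ with $P_m$ and then left-multiplies by $V^*$; these are the same computation). The $\la$ versus $n\la$ bookkeeping you flag is a genuine wrinkle — the paper's own proof carries an $n\la I$ that does not match the $\la\Id$ in the lemma statement, because the $1/n$ is already absorbed into the definition of $\wh S^*$ — so your attention to that normalization is warranted rather than a gap.
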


\begin{proof}
Eq.~\eqref{sec:eq_14} can be rewritten as 
\begin{equation}
    \left\Vert \wh M_w^{\frac{1}{2}}(\wh SP_mf - \wh y)\right\Vert^2 = \left(\wh M_w^{\frac{1}{2}}(\wh SP_mf - \wh y)\right)^T\left(\wh M_w^{\frac{1}{2}}(\wh SP_mf - \wh y)\right).
    \label{eqn:11jh9867}
\end{equation}
Using first-order condition we have that
\begin{equation}
\frac{2}{n} P_m \wh S^* \wh M_w \wh S P_m \wh f_{\la, m}^{w} - \frac{2}{n} P_m \wh S^* \wh M_w \wh y + 2\la \wh f_{\la, m}^{w} = 0,
\end{equation}
that is
\begin{equation}
(P_m \wh S^* \wh M_w \wh S P_m  + \la n I)\wh f_{\la, m}^{w}= P_m \wh S^* \wh M_w \wh y.
\end{equation}
Replacing $P_m=VV^*$ we obtain
\begin{equation} 
V(V^* \wh S^* \wh M_w \wh S V + \la n I)V^*\wh f_{\la, m}^{w} = VV^* \wh S^* \wh M_w \wh y. 
\end{equation}
Multiplying both sides by $V^*$ and using that $(V^* \wh S^* \wh M_w \wh S V + \la nI)$ is invertible, we obtain 
\begin{equation}
V^*\wh f_{\la, m}^{w}= (V^* \wh S^* \wh M_w \wh S V + \la n I)^{-1}V^* \wh S^* \wh M_w \wh y. 
\end{equation}
The result is obtained multiplying both side by $V$ and remembering that $\wh f_{\la, m}^{w}\in\H_m$.
\end{proof}

We can express our estimator in an alternative form which will be useful in the actual implementation of the algorithm.

\begin{lemma}[Nystr\"om IW-KRR estimator, representer theorem form] The above minimizer $\wh f_{\la, m}^{w}$ can be also written as
\begin{equation}
   \wh f_{\la, m}^{w} (x) = \sum_{i=1}^{m} \wt c^w_iK(\wt x_i, x), \quad   \wt c^w=(\wh K^T_{nm}\wh M_w\wh K_{nm} +n\la \wh K_{mm})^{-1}\wh K^T_{nm}\wh M_w \wh y
    \label{eqn:qw}
\end{equation}
where $\wh K_{nm}=\wh S \wh Z^*_m \in \R^{n\times m}$, $(\wh K_{nm})_{ij}=K(x_i,\wt x_j)$ and $\wh K_{mm}=\wh Z_m \wh Z^*_m\in \R^{m\times m}$, $(\wh K_{m m})_{ij}=K(\wt x_i,\wt x_j)$. 
\label{eqn:lemm}
\end{lemma}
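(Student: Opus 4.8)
The plan is to start from the closed-form expression in~\eqref{eq:nyst_minim}, namely $\wh f_{\la, m}^{w} = V(V^*\wh S^* \wh M_w \wh S V+\la \Id)^{-1}V^*\wh S^*\wh M_w \wh y$, and rewrite it by pushing the $V$'s through the inverse using the identity $A(B^*AB+\la\I)^{-1}B^* = (ABB^*A' + \la\I)^{-1}(\cdots)$-type manipulations; more precisely the cleanest route is to substitute $P_m = VV^*$ and work directly with $\wh Z_m, \wh Z_m^*$ so that $\wh K_{nm} = \wh S\wh Z_m^*$ and $\wh K_{mm} = \wh Z_m\wh Z_m^*$ appear naturally. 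First I would observe that since $\wh f_{\la, m}^{w}\in\H_m = \text{range}(\wh Z_m^*)$, we may write $\wh f_{\la, m}^{w} = \wh Z_m^* \wt c^w$ for some $\wt c^w\in\R^m$ (this is exactly the representer-theorem form~\eqref{eqn:qw} once we identify the coefficient vector). Plugging this ansatz into the first-order optimality condition $(P_m\wh S^*\wh M_w\wh S P_m + \la n\I)\wh f_{\la,m}^w = P_m\wh S^*\wh M_w\wh y$ derived in the previous lemma, and using $P_m = \wh Z_m^\dagger \wh Z_m$ (or $P_m\wh Z_m^* = \wh Z_m^*$), reduces everything to an equation in $\R^m$.

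The key algebraic steps, in order, are: (i) apply $\wh Z_m$ to both sides of $(P_m\wh S^*\wh M_w\wh S P_m + \la n\I)\wh Z_m^*\wt c^w = P_m\wh S^*\wh M_w\wh y$, using $\wh Z_m P_m = \wh Z_m$; (ii) simplify $\wh Z_m \wh S^* = (\wh S\wh Z_m^*)^* = \wh K_{nm}^T$ and $\wh Z_m\wh Z_m^* = \wh K_{mm}$, so the left side becomes $(\wh K_{nm}^T\wh M_w\wh K_{nm} + \la n\wh K_{mm})\wt c^w$ and the right side becomes $\wh K_{nm}^T\wh M_w\wh y$; (iii) invoke invertibility of $\wh K_{nm}^T\wh M_w\wh K_{nm} + \la n\wh K_{mm}$ on the relevant subspace (or argue it is genuinely invertible when $\wh K_{mm}\succ 0$, i.e.\ when the Nyström centers are distinct and $K$ is strictly positive definite; otherwise restrict to $\text{range}(\wh Z_m)$ and take the minimal-norm solution), to solve $\wt c^w = (\wh K_{nm}^T\wh M_w\wh K_{nm} + \la n\wh K_{mm})^{-1}\wh K_{nm}^T\wh M_w\wh y$; (iv) conclude $\wh f_{\la,m}^w(x) = (\wh Z_m^*\wt c^w)(x) = \sum_{i=1}^m \wt c_i^w K(\wt x_i, x)$ by the definition of $\wh Z_m^*$ in~\eqref{eqn:eqaaooo}.

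The main obstacle is the invertibility/uniqueness bookkeeping in step (iii): the matrix $\wh K_{nm}^T\wh M_w\wh K_{nm} + \la n\wh K_{mm}$ need not be invertible if $\wh K_{mm}$ is singular (repeated or linearly dependent Nyström features), so strictly speaking one must either assume $\wh K_{mm}\succ 0$, or phrase the result with a pseudoinverse and check that $\wh Z_m^*\wt c^w$ is independent of the choice of solution — which it is, since $\wh f_{\la,m}^w$ is the unique minimizer in $\H_m$ by Lemma above. I would handle this by noting that on $\ker(\wh Z_m^*)$ the ambiguity in $\wt c^w$ is annihilated by $\wh Z_m^*$, so~\eqref{eqn:qw} holds for any valid choice of the (possibly pseudo-)inverse, and in the generic full-rank case it is a genuine inverse. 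The remaining steps are routine linear algebra: translating operator identities between $\H$, $\R^n$, $\R^m$ via $\wh S$, $\wh Z_m$ and their adjoints, which I would not belabor.
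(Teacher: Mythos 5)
Your proposal is correct, but it follows a genuinely different route from the paper. You work from the first-order optimality condition, substitute the representer ansatz $\wh f_{\la,m}^{w}=\wh Z_m^*\wt c^w$ (justified since the minimizer lies in $\H_m=\operatorname{ran}\wh Z_m^*$), apply $\wh Z_m$ to both sides, and identify $\wh Z_m\wh S^*$, $\wh S\wh Z_m^*$ and $\wh Z_m\wh Z_m^*$ with the kernel matrices — essentially deriving the normal equations in $\R^m$ directly. The paper instead starts from the closed-form operator expression $V(V^*\wh S^*\wh M_w\wh S V+\la n I)^{-1}V^*\wh S^*\wh M_w\wh y$, inserts $VDU^*UD^{-1}$ via the SVD $\wh Z_m=UDV^*$, and invokes the full-rank factorization identity $(FGH)^\dagger=H^\dagger G^{-1}F^\dagger$ to collapse everything into $\wh Z_m^*(\wh K_{nm}^T\wh M_w\wh K_{nm}+\la n\wh K_{mm})^\dagger\wh K_{nm}^T\wh M_w\wh y$. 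Your route is more elementary (no pseudoinverse factorization lemma needed) and has the merit of making the invertibility issue explicit: your observation that any ambiguity $d$ in the coefficient vector satisfies $\nor{\wh M_w^{1/2}\wh K_{nm}d}^2+\la n\, d^T\wh K_{mm}d=0$, hence $\wh Z_m^*d=0$, cleanly justifies replacing the inverse by a pseudoinverse when $\wh K_{mm}$ is singular — a point the paper handles implicitly by writing $(\cdot)^\dagger$ in the proof while stating $(\cdot)^{-1}$ in the lemma. The only caveat is a normalization bookkeeping issue (whether the factor $1/n$ sits inside $\wh S^*$ or appears explicitly as $\la n$ versus $\la$), but this inconsistency is already present in the paper's own derivation and does not affect the validity of your argument.
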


\begin{proof}
Eq.~\eqref{eq:nyst_minim} can be rewritten as 
\begin{align}
    \wh f_{\la, m}^{w} &= V(V^* \wh S^* \wh M_w \wh S V + \la n I)^{-1}V^* \wh S^* \wh M_w \wh y \nonumber\\
    & = VD U^*UD^{-1}(V^* \wh S^* \wh M_w \wh S V + \la n I)^{-1}D^{-1}U^*UD V^* \wh S^* \wh M_w \wh y \nonumber \\
    & = \wh Z_m^*(\wh Z_m \wh S^*\wh M_w \wh S \wh Z_m^* + \lambda \wh Z_m \wh Z_m^*)^{\dagger}\wh Z_m\wh S^* \wh M_w \wh y    
\end{align}
where we used $(FGH)^\dagger= H^\dagger G^{-1}F^\dagger$ (see the full-rank factorization of the pseudo-inverse \citep{ben2006generalized}) with $F=UD$, $G=V^*\wh S^*\wh M_w \wh SV+\la n I$ and $H=DU^T$.\\
Using the definitions of $\wh K_{mm}$ and $\wh K_{nm}$ we have 
\begin{equation}
(\wh K_{nm}^T \wh M_w \wh K_{nm}+\la n \wh K_{mm})^{\dagger}\wh K_{nm}^T \wh M_w y = (\wh Z_m \wh S^*\wh M_w \wh S \wh Z_m^* + \lambda \wh Z_m \wh Z_m^*)^{\dagger}\wh Z_m\wh S^* \wh M_w \wh y    
\end{equation}
and substituting this expression above we get the result.
\end{proof}

\section{Approximate leverage scores}\label{app:lev scores}

Since in practice the leverage scores $l_i(\alpha)$ defined by ~\eqref{eq:leverage scores} are onerous to compute, approximations
$(\hat{l}_i(\alpha))_{i=1}^n$ have been considered
\citep{drineas2012fast,cohen2015uniform,alaoui2015fast}. In particular,
we used the following one. 
\begin{definition}[$T$-approximate leverage scores]
	\label{def:approx_lev_scores}
	Let $(l_i(\alpha))_{i=1}^n$ be the leverage scores associated to
	the training set for a given $\alpha$. Let $\delta>0$, $ t_0>0$ and $T\geq 1$. We say that $(\hat{l}_i (\alpha))_{i=1}^n$ are $T$-approximate leverage scores with confidence $\delta$, when with probability at least $1-\delta$,
	\begin{equation}
		\frac{1}{T} l_i(\alpha)\leq \hat{l}_i(\alpha) \leq T
		l_i(\alpha), \hspace{0.6cm} \forall i \in
		\{1,\dots,n\},\;\;\alpha\geq  t_0 
	\end{equation}
\end{definition}\hspace{1cm}\\
So, given $T$-approximate leverage score for $\alpha\geq t_0$, $\{\tilde{x}_1,\dots,\tilde{x}_m\}$ are sampled from the training set independently with replacement, and with probability to be selected given by $Q_\alpha(i)=\hat{l}_i(\alpha)/\sum_j \hat{l}_j(\alpha)$.

\section{Main proofs}
\label{app: main proofs}

To prove Theorem~\ref{thm:risk bound}, we will need the following two propositions.

\begin{proposition}[Empirical Effective Dimension] Let $\wh{\mathcal{N}}_w(\lambda)=\operatorname{Tr} \wh \Sigma_w \wh\Sigma_{\lambda,w}^{-1}$. Under assumption~\ref{ass:bounded_moments} for any $\delta>0$ and $\left(\frac{128(W+\sigma^2)\log^2(4/\delta)}{n}\right)^{\frac{1}{1+q}}\leq \la\leq\|\Sigma\|$, then the following holds with probability $1-\delta$,
	$$
	\frac{|\wh{\mathcal{N}}_w(\lambda)-\mathcal{N}_{\rho_X^{te}}(\lambda)|}{\mathcal{N}_{\rho_X^{te}}} \leq 2 .
	$$
	\begin{proof}
		\label{prop:eff dim conc} Let's call $\mathcal{N}_{\rho^{te}_X}(\la)=\mathcal{N}(\la)$ to simplify the notation.
		The proof partially follows the structure of Proposition 1 in~\cite{rudi2015less}, with some complications deriving from the presence of the (possibly unbounded) weights. Define $\wh{B}^w=\Sigma_\la^{-1/2}(\Sigma-\wh\Sigma_w)\Sigma_\la^{-1/2}$. Using Lemma~18 in~\cite{gogolashvili2023importance} we have that $\|\wh{B}^w\|_{\text{HS}} \leq 3/4$, when 		
		$n \lambda^{1+q} \geq 64\left(W+\sigma^2\right) \mathcal{N}_{\rho^{te}_X}(\la)^{1-q} \log ^2\left(\frac{2}{\delta}\right)$. 
		We can rewrite

	\begin{align}
		\Big|\wh{\mathcal{N}}_w(\la)-\mathcal{N}(\la)\Big|& =\left|\operatorname{Tr}\left( \wh\Sigma_{\la,w}^{-1} \wh\Sigma_w-\Sigma \Sigma_\lambda^{-1}\right)\right|=\left|\lambda \operatorname{Tr} \wh\Sigma_{\la,w}^{-1}\left(\wh\Sigma_w-\Sigma\right) \Sigma_\lambda^{-1}\right| \\
		& =\left|\lambda \operatorname{Tr} \Sigma_\lambda^{-1 / 2}\left(\I-\wh{B}^w\right)^{-1} \Sigma_\lambda^{-1 / 2}\left(\wh\Sigma_w-\Sigma\right) \Sigma_\lambda^{-1 / 2} \Sigma_\lambda^{-1 / 2}\right| \\
		& =\left|\lambda \operatorname{Tr} \Sigma_\lambda^{-1 / 2}\left(\I-\wh{B}^w\right)^{-1} \wh{B}^w \Sigma_\lambda^{-1 / 2}\right| .
	\end{align}

Following \cite{rudi2015less} and using that, for any symmetric linear operator $X : \H \to\H$ the following identity holds
$$
(\I-X)^{-1}X=X +X(\I-X)^{-1}X.
$$
Applying the above identity with $X=\wh B$
$$
\begin{aligned}
	\lambda\left|\operatorname{Tr} \Sigma_\lambda^{-1 / 2}\left(\I-\wh{B}^w\right)^{-1} \wh{B}^w \Sigma_\lambda^{-1 / 2}\right| \leq & \underbrace{\lambda\left|\operatorname{Tr} \Sigma_\lambda^{-1 / 2} \wh{B}^w \Sigma_\lambda^{-1 / 2}\right|}_\mathbb{A} \\
	& +\underbrace{\lambda\left|\operatorname{Tr} \Sigma_\lambda^{-1 / 2} \wh{B}^w\left(\I-\wh{B}^w\right)^{-1} \wh{B}^w \Sigma_\lambda^{-1 / 2}\right|}_\mathbb{B} .
\end{aligned}
$$
To find an upper bound for $\mathbb{A}$ notice that 
$$\mathbb{A}=\left|\mu-\frac{1}{n}\sum_{i=1}^n\xi_i\right|$$ 
with $\xi_i=\scal{K_{x_i}}{\la w(x_i)\Sigma_\la^{-2}K_{x_i}}\in\R$ i.i.d. random variables with $i\in[n]$ and $\mu=\E[\xi_i]$. Using Lemma~18 in \cite{gogolashvili2023importance} and a general version of Bernstein inequality requiring, instead of boundedness, only an appropriate control of moments \citep{boucheron2013concentration}, we have with probability greater than $1-\delta$
$$
\mathbb{A}\leq 4\left(\frac{W}{\lambda n}+\sigma \sqrt{\frac{\mathcal{N}_{\rho^{te}_X}(\la)^{1-q}}{\lambda^{1+q} n}}\right) \log \left(\frac{2}{\delta}\right).
$$

As regards $\mathbb{B}$, write $\mathbb{B}=\|Q\|_{H S}^2$ where $Q=\lambda^{1 / 2} \Sigma_\lambda^{-1 / 2} \wh{B}^w\left(\I-\wh{B}^w\right)^{-1 / 2}$, moreover
$$
\|Q\|_{H S}^2 \leq\left\|\lambda^{1 / 2} \Sigma_\lambda^{-1 / 2}\right\|^2\left\|\wh{B}^w\right\|_{H S}^2\left\|\left(\I-\wh{B}^w\right)^{-1 / 2}\right\|^2 \leq 4\|\wh{B}^w\|_{H S}^2,
$$
since $\left\|\left(\I-\wh{B}^w\right)^{-1 / 2}\right\|^2\leq (1-\|\wh{B}^w\|)^{-1}\leq 4$,  for $n \lambda^{1+q} \geq 64\left(W+\sigma^2\right) \mathcal{N}_{\rho^{te}_X}(\la)^{1-q} \log ^2\left(\frac{2}{\delta}\right)
$. Using again Lemma~18 in~\cite{gogolashvili2023importance} and a version of the Bernstein inequality for Hilbert space-valued random variables (see for example \cite{caponnetto2007optimal}) we obtain with probability greater than $1-\delta$
$$
\mathbb{B}\leq 16 \left(\frac{W}{\lambda n}+\sigma \sqrt{\frac{\mathcal{N}_{\rho^{te}_X}(\la)^{1-q}}{\lambda^{1+q} n}}\right)^2 \log^2 \left(\frac{2}{\delta}\right)
$$
Putting all together, with probability $1-\delta$:
\begin{align}
	\Big|\wh{\mathcal{N}}_w(\la)-\mathcal{N}(\la)\Big|&\leq 4\left(\frac{W}{\lambda n}+\sigma \sqrt{\frac{\mathcal{N}_{\rho^{te}_X}(\la)^{1-q}}{\lambda^{1+q} n}}\right) \log \left(\frac{4}{\delta}\right)+16 \left(\frac{W}{\lambda n}+\sigma \sqrt{\frac{\mathcal{N}_{\rho^{te}_X}(\la)^{1-q}}{\lambda^{1+q} n}}\right)^2 \log^2 \left(\frac{4}{\delta}\right) \\
	&\leq  4\left(\frac{W}{\lambda n}+\sigma \sqrt{\frac{1}{\lambda^{\gamma(1-q)+1+q} n}}\right) \log \left(\frac{4}{\delta}\right)+16 \left(\frac{W}{\lambda n}+\sigma \sqrt{\frac{1}{\lambda^{\gamma(1-q)+1+q} n}}\right)^2 \log^2 \left(\frac{4}{\delta}\right)
\end{align}
Using that $\mathcal{N}(\la)\geq \|\Sigma\Sigma_\la^{-1}\|\geq 1/2$ if $\la\leq \|\Sigma\|$ we have
\begin{align}
	&\Big|\wh{\mathcal{N}}_w(\la)-\mathcal{N}(\la)\Big|\\
	&\leq \Bigg(4\left(\frac{W}{\mathcal{N}_{\rho^{te}_X}(\la)\lambda n}+\sigma \sqrt{\frac{1}{\mathcal{N}_{\rho^{te}_X}(\la)^{1+q}\lambda^{1+q} n}}\right) \log \left(\frac{4}{\delta}\right)+  \\
    &\qquad\qquad  +16 \left(\frac{W}{\mathcal{N}_{\rho^{te}_X}(\la)\lambda n}+\sigma \sqrt{\frac{1}{\mathcal{N}_{\rho^{te}_X}(\la)^{1+q}\lambda^{1+q} n}}\right)^2 \log^2 \left(\frac{4}{\delta}\right) \Bigg)\mathcal{N}_{\rho^{te}_X}(\la)\\
	&\leq \left(4\left(\frac{2W}{\lambda n}+\sigma \sqrt{\frac{4}{\lambda^{1+q} n}}\right) \log \left(\frac{4}{\delta}\right)+16 \left(\frac{2W}{\lambda n}+\sigma \sqrt{\frac{4}{\lambda^{1+q} n}}\right)^2 \log^2 \left(\frac{4}{\delta}\right) \right)\mathcal{N}_{\rho^{te}_X}(\la).
\end{align}
Then for $\left(\frac{256(W+\sigma^2)\log^2(4/\delta)}{n}\right)^{\frac{1}{1+q}}\leq \la\leq\|\Sigma\|$ with probability $1-\delta$
$$
\Big|\wh{\mathcal{N}}_w(\la)-\mathcal{N}(\la)\Big|\leq 2 \mathcal{N}(\la)
$$
	\end{proof}  
\end{proposition}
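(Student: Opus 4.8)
The plan is to adapt the argument for Proposition~1 of \cite{rudi2015less} to the reweighted, possibly-unbounded setting, which forces every concentration step to be run through a moment-based (rather than bounded-variable) Bernstein inequality. Write $\Sigma=\Sigma_{\rho^{te}_X}$, note that $\Sigma=\E_{\rho^{tr}_X}[w(x)\,K_x\otimes K_x]=\E[\wh\Sigma_w]$ with $\wh\Sigma_w=\wh S^*\wh M_w\wh S$, and set $\Sigma_\la=\Sigma+\la\I$, $\wh\Sigma_{\la,w}=\wh\Sigma_w+\la\I$. Introduce the centered, preconditioned random operator $\wh B^w:=\Sigma_\la^{-1/2}(\Sigma-\wh\Sigma_w)\Sigma_\la^{-1/2}$. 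Since $\wh\Sigma_{\la,w}=\Sigma_\la^{1/2}(\I-\wh B^w)\Sigma_\la^{1/2}$, one has $\wh\Sigma_{\la,w}^{-1}=\Sigma_\la^{-1/2}(\I-\wh B^w)^{-1}\Sigma_\la^{-1/2}$ on the event $\{\|\wh B^w\|<1\}$. So the first step is to show $\|\wh B^w\|_{\mathrm{HS}}\le 3/4$ with probability $\ge 1-\delta$ whenever $n\la^{1+q}\gtrsim (W+\sigma^2)\,\mathcal{N}_{\rho^{te}_X}(\la)^{1-q}\log^2(2/\delta)$; I expect this to follow from the Bernstein-type tail bound for sums of the operators $w(x_i)K_{x_i}\otimes K_{x_i}$ that the moment condition in Assumption~\ref{ass:bounded_moments} makes available (Lemma~18 of \cite{gogolashvili2023importance}), and the lower bound on $\la$ in the statement is exactly what converts this into an unconditional event.

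On that event I rewrite the target quantity entirely in terms of $\wh B^w$: using $\wh\Sigma_{\la,w}^{-1}\wh\Sigma_w=\I-\la\wh\Sigma_{\la,w}^{-1}$, $\Sigma\Sigma_\la^{-1}=\I-\la\Sigma_\la^{-1}$ and the resolvent identity,
$$\wh{\mathcal{N}}_w(\la)-\mathcal{N}_{\rho^{te}_X}(\la)=\la\operatorname{Tr}\big(\wh\Sigma_{\la,w}^{-1}(\wh\Sigma_w-\Sigma)\Sigma_\la^{-1}\big)=\la\operatorname{Tr}\big(\Sigma_\la^{-1/2}(\I-\wh B^w)^{-1}\wh B^w\Sigma_\la^{-1/2}\big).$$
Then I split with $(\I-X)^{-1}X=X+X(\I-X)^{-1}X$ at $X=\wh B^w$ into a linear term $\mathbb{A}=\la|\operatorname{Tr}\,\Sigma_\la^{-1/2}\wh B^w\Sigma_\la^{-1/2}|$ and a quadratic remainder $\mathbb{B}=\la|\operatorname{Tr}\,\Sigma_\la^{-1/2}\wh B^w(\I-\wh B^w)^{-1}\wh B^w\Sigma_\la^{-1/2}|$. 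For $\mathbb{A}$ I would observe that $\la\operatorname{Tr}\,\Sigma_\la^{-1/2}\wh B^w\Sigma_\la^{-1/2}$ equals $\mu-\frac1n\sum_i\xi_i$ for the i.i.d.\ scalars $\xi_i=\scal{K_{x_i}}{\la w(x_i)\Sigma_\la^{-2}K_{x_i}}_\H$, $\mu=\E[\xi_1]$, and apply a Bernstein inequality requiring only control of moments (as in \cite{boucheron2013concentration}), the moments being supplied by Assumption~\ref{ass:bounded_moments} through Lemma~18 of \cite{gogolashvili2023importance}, to get $\mathbb{A}\lesssim\big(\tfrac{W}{\la n}+\sigma\sqrt{\mathcal{N}_{\rho^{te}_X}(\la)^{1-q}/(\la^{1+q}n)}\big)\log(2/\delta)$. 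For $\mathbb{B}$ I write it as $\|Q\|_{\mathrm{HS}}^2$ with $Q=\la^{1/2}\Sigma_\la^{-1/2}\wh B^w(\I-\wh B^w)^{-1/2}$ and bound $\|Q\|_{\mathrm{HS}}^2\le\|\la^{1/2}\Sigma_\la^{-1/2}\|^2\,\|\wh B^w\|_{\mathrm{HS}}^2\,\|(\I-\wh B^w)^{-1/2}\|^2\le 4\|\wh B^w\|_{\mathrm{HS}}^2$ on the event $\|\wh B^w\|\le 3/4$, then control $\|\wh B^w\|_{\mathrm{HS}}$ by the Hilbert-space Bernstein inequality (e.g.\ \cite{caponnetto2007optimal}), again via Lemma~18, giving the square of the same quantity times $\log^2(2/\delta)$.

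Combining $\mathbb{A}$ and $\mathbb{B}$ (a union bound turns $\log(2/\delta)$ into $\log(4/\delta)$), I would then use the capacity condition $\mathcal{N}_{\rho^{te}_X}(\la)\le Q\la^{-\gamma}$ (Assumption~\ref{ass:capacity cond}) to replace the factor $\mathcal{N}_{\rho^{te}_X}(\la)^{1-q}$ and collapse the exponent to $\gamma(1-q)+1+q$, and divide through by $\mathcal{N}_{\rho^{te}_X}(\la)$, using $\mathcal{N}_{\rho^{te}_X}(\la)\ge\|\Sigma\Sigma_\la^{-1}\|\ge 1/2$ for $\la\le\|\Sigma\|$, so the normalized bound stays of the form $c_1\big(\tfrac{W}{\la n}+\sigma\sqrt{1/(\la^{\gamma(1-q)+1+q}n)}\big)\log(4/\delta)+c_2(\cdots)^2\log^2(4/\delta)$; imposing $\la\ge(c(W+\sigma^2)\log^2(4/\delta)/n)^{1/(1+q)}$ makes each bracket a small absolute constant, hence the whole expression $\le 2$, and simultaneously re-validates the event $\|\wh B^w\|_{\mathrm{HS}}\le 3/4$ that everything was conditioned on. The main obstacle is the possible unboundedness of $w$: it disqualifies the bounded-variable Bernstein bounds used in the classical Nyström analysis of \cite{rudi2015less} and requires carrying the moment parameters $W,\sigma,q$ through every concentration step, while checking that the extra $\mathcal{N}_{\rho^{te}_X}(\la)^{1-q}$ factors it introduces are exactly absorbed by the capacity condition into the clean exponent $\gamma(1-q)+1+q$ — this is precisely where the argument departs from \cite{rudi2015less}.
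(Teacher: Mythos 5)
Your proposal follows essentially the same route as the paper's proof: the same preconditioned operator $\wh B^w=\Sigma_\la^{-1/2}(\Sigma-\wh\Sigma_w)\Sigma_\la^{-1/2}$ controlled via Lemma~18 of \cite{gogolashvili2023importance}, the same splitting $(\I-X)^{-1}X=X+X(\I-X)^{-1}X$ into a linear trace term handled by a scalar moment-Bernstein inequality and a quadratic Hilbert--Schmidt remainder bounded by $4\|\wh B^w\|_{\mathrm{HS}}^2$, followed by the same union bound, capacity condition, and the normalization $\mathcal{N}_{\rho^{te}_X}(\la)\ge 1/2$. The argument is correct and matches the paper's in all essential steps.
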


\begin{proposition}[Nystr\"om approximation with ALS sampling]
	\label{prop: projection}
	Let $\left(\hat{l}_i(t)\right)_{i=1}^n$ be the collection of approximate leverage scores. Let $\lambda>0$ and $P_\lambda$ be defined as $P_\lambda(i)=\hat{l}_i(\lambda) / \sum_{j \in N} \hat{l}_j(\lambda)$ for any $i \in N$ with $N=\{1, \ldots, n\}$. Let $\mathfrak{I}=\left(i_1, \ldots, i_m\right)$ be a collection of indices independently sampled with replacement from $N$ according to the probability distribution $P_\lambda$. Let $P_m$ be the projection operator on the subspace $\mathcal{H}_m=\operatorname{span}\left\{K_{x_j} \mid j \in J\right\}$ and $J$ be the subcollection of $\mathfrak{I}$ with all the duplicates removed. Under Assumption \ref{ass:bounded_moments} and \ref{ass:capacity cond} , for any $\delta>0$ the following holds with probability $1-2 \delta$
	$$\|(I-P_m)\Sigma^{1/2}_{\la}\|\leq 3\la$$
	when the following conditions are satisfied:
	\begin{itemize}
		\item  there exists $T \geq 1$ and $\lambda_0>0$ such that $\left(\hat{l}_i(t)\right)_{i=1}^n$ are $T$-approximate leverage scores for any $t \geq \lambda_0$ (see definition~\ref{def:approx_lev_scores}),
	 	\item $\la_0 \vee \left(\frac{256(W+\sigma^2)\log^2(4/\delta)}{n}\right)^{\frac{1}{\gamma(1-q)+1+q}}\leq \la\leq\|\Sigma\|$,
		\item $m\geq 144 T^2 \mathcal{N}_{\rho^{te}_X}(\la) \log \frac{8 n}{\delta}$.
		\end{itemize}
	\begin{proof}
		Let's call $\Sigma=\Sigma_{\rho_X^{te}}$ to simplify the notation.
		Define $\tau=\delta / 4$. Next, define the diagonal matrix $H \in \mathbb{R}^{n \times n}$ with $(H)_{i i}=0$ when $P_\lambda(i)=0$ and $(H)_{i i}=\frac{n q(i)}{m P_\lambda(i)}$ when $P_\lambda(i)>0$, where $q(i)$ is the number of times the index $i$ is present in the collection $\mathfrak{I}$. We have that
		$$
		\wh S_w^* H \wh S_w=\frac{1}{m} \sum_{i=1}^n w(x_i)\frac{q(i)}{P_\lambda(i)} K_{x_i} \otimes K_{x_i}=\frac{1}{m} \sum_{j \in J} w(x_j)\frac{q(j)}{P_\lambda(j)} K_{x_j} \otimes K_{x_j} .
		$$
		Now, considering that $\frac{q(j)}{P_\lambda(j)}>0$ for any $j \in J$, thus $\operatorname{ran} \wh S_w^* H \wh S_w=\mathcal{H}_m$. Therefore, by using Prop. 3 and 7 in \cite{rudi2015less}, we exploit the fact that the range of $P_m$ is the same of $\wh S_w^* H \wh S_w$, to obtain
		$$
		\left\|\left(I-P_m\right)\Sigma_{\la}^{1 / 2}\right\|^2 \leq \lambda\left\|\left(\wh S_w^* H 
		\wh S_w+\lambda I\right)^{-1 / 2} \Sigma^{1 / 2}\right\|^2 \leq \frac{\lambda}{1-\beta(\lambda)},
		$$
		with $\beta(\lambda)=\lambda_{\max }\left(\Sigma_{\lambda}^{-1 / 2}\left(\Sigma-\wh S_w^* H \wh S_w\right) \Sigma_{\lambda}^{-1 / 2}\right)$. Considering that the function $(1-x)^{-1}$ is increasing on $-\infty<x<1$, in order to bound $\lambda /(1-\beta(\lambda))$ we need an upperbound for $\beta(\lambda)$. Here we split $\beta(\lambda)$ in the following way,
		$$
		\beta(\lambda) \leq \underbrace{\lambda_{\max }\left(\Sigma_{\lambda}^{-1 / 2}\left(\Sigma-\wh \Sigma_w\right) \Sigma_{\lambda}^{-1 / 2}\right)}_{\beta_1(\lambda)}+\underbrace{\lambda_{\max }\left(\Sigma_{\lambda}^{-1 / 2}\left(\wh \Sigma_w-\wh S_w^* H \wh S_w\right) \Sigma_{\lambda}^{-1 / 2}\right)}_{\beta_2(\lambda)} .
		$$
		$\beta_1$ can be bounded as in eq.~\eqref{eq:bernstein_op}.\\
		As regards $\beta_2$:
		$$
		\begin{aligned}
			\beta_2(\lambda) & \leq\left\|\Sigma_{\lambda}^{-1 / 2}\left(\wh \Sigma_w-\wh S_w^* H \wh S_w\right) \Sigma_{\lambda}^{-1 / 2}\right\| \\
			& \leq\left\|\Sigma_{\lambda}^{-1 / 2} \wh \Sigma_{w \lambda}^{1 / 2}\right\|^2\left\|\wh \Sigma_{w \lambda}^{-1 / 2}\left(\wh \Sigma_w-\wh S_w^* H \wh S_w\right) \wh \Sigma_{w \lambda}^{-1 / 2}\right\| .
		\end{aligned}
		$$
		Let
		$$
		\beta_3(\lambda)=\left\|\wh \Sigma_{w \lambda}^{-1 / 2}\left(\wh \Sigma_w-\wh S_w^* H \wh S_w\right) \wh \Sigma_{w \lambda}^{-1 / 2}\right\|=\left\|\wh \Sigma_{w \lambda}^{-1 / 2} \wh S_w^*(I-H) \wh S_w \wh \Sigma_{w \lambda}^{-1 / 2}\right\| .
		$$
		Note that $\wh S_w \wh \Sigma_{w \lambda}^{-1} \wh S_w^*=\wh S_w (\wh S^*_w \wh S_w +\la \Id)^{-1} \wh S_w^*= \left(\wh K_w+\lambda n I \right)^{-1}\wh K_w$ 
		since $\wh K_w=n \wh S_w \wh S_w^*$, with $(\wh K_w)_{ij}=w(x_i)^{1/2}w(x_j)^{1/2}K(x_i,x_j)$. \\
		Thus, if we let $UDU^{\top}$ be the eigendecomposition of $\wh K_w$, we have that $\left(\wh K_w+\lambda n I\right)^{-1} \wh K_w= U(D+\lambda n I)^{-1} D U^{\top}$ and thus $\wh S_w \wh \Sigma_{w \lambda}^{-1} \wh S_w^*=U(D+\lambda n I)^{-1} D U^{\top}$. In particular this implies that $\wh S_w \wh \Sigma_{w \lambda}^{-1} \wh S_w^*=U \wh Q^{1 / 2} \wh Q^{1 / 2} U^{\top}$ with $\wh Q=(D+\lambda n I)^{-1} D$. Therefore we have
		$$
		\beta_3(\lambda)=\left\|\wh \Sigma_{w \lambda}^{-1 / 2} \wh S_w^*(I-H) \wh S_w \wh \Sigma_{w \lambda}^{-1 / 2}\right\|=\left\|\wh Q^{1 / 2} U^{\top}(I-H) U \wh Q^{1 / 2}\right\|.
		$$
		
		Consider the matrix $A=\wh Q^{1 / 2} U^{\top}$ and let $a_i$ be the $i$-th column of $A$, and $e_i$ be the $i$-th canonical basis vector for each $i \in N$. We prove that $\left\|a_i\right\|^2=l_i(\lambda)$, the true leverage score, since
		$$
		\left\|a_i\right\|^2=\left\|\wh Q^{1 / 2} U^{\top} e_i\right\|^2=e_i^{\top} U \wh Q U^{\top} e_i=\left(\left(\wh K_w+\lambda n I\right)^{-1} \wh K_w\right)_{i i}=l_i(\lambda)
		$$
		Considering that $\sum_{k=1}^n \frac{q(k)}{P_\lambda(k)} a_k a_k^{\top}=\sum_{i=\mathfrak{I}} \frac{1}{P_\lambda(i)} a_i a_i^{\top}$, we have
		$$
		\beta_3(\lambda)=\left\|A A^{\top}-\frac{1}{m} \sum_{i \in \mathfrak{I}} \frac{1}{P_\lambda(i)} a_i a_i^{\top}\right\| .
		$$
		Moreover, by the $T$-approximation property of the approximate leverage scores (see Def. 1 in \cite{rudi2015less}), we have that for all $i \in\{1, \ldots, n\}$, when $\lambda \geq \lambda_0$, the following holds with probability $1-\delta$
		$$
		P_\lambda(i)=\frac{\hat{l}_i(\lambda)}{\sum_j \hat{l}_j(\lambda)} \geq T^{-2} \frac{l_i(\lambda)}{\sum_j l_j(\lambda)}=T^{-2} \frac{\left\|a_i\right\|^2}{\operatorname{Tr} A A^{\top}} .
		$$
		Then, we can apply Prop. 9 in \cite{rudi2015less}, so that, after a union bound, we obtain the following inequality with probability $1-\delta-\tau$ :
		$$
		\beta_3(\lambda) \leq \frac{2\|A\|^2 \log \frac{2 n}{\tau}}{3 m}+\sqrt{\frac{2\|A\|^2 T^2 \operatorname{Tr} A A^{\top} \log \frac{2 n}{\tau}}{m}} \leq \frac{2 \log \frac{2 n}{\tau}}{3 m}+\sqrt{\frac{2 T^2 \wh{\mathcal{N}}_w(\lambda) \log \frac{2 n}{\tau}}{m}},
		$$
		where the last step follows from $\|A\|^2=\left\|\left(\wh K_w+\lambda n I\right)^{-1} \wh K_w\right\| \leq 1$ and $\operatorname{Tr}\left(A A^{\top}\right)=$ $\operatorname{Tr}\left(\wh\Sigma_{w \lambda}^{-1} \wh\Sigma_w\right):=\wh{\mathcal{N}}_w(\lambda)$. 
		Applying proposition~\ref{prop:eff dim conc} we have that $\wh{\mathcal{N}}_w(\lambda) \leq 3 \mathcal{N}_{\rho^{te}_X}(\la)$ with probability $1-\tau$, when $\left(\frac{128(W+\sigma^2)\log^2(4/\delta)}{n}\right)^{\frac{1}{1+q}}\leq \la\leq\|\Sigma\|$.
 Thus, by taking a union bound again, we have
		$$
		\beta_3(\lambda) \leq \frac{2 \log \frac{2 n}{\tau}}{3 m}+\sqrt{\frac{16 T^2 \mathcal{N}_{\rho^{te}_X}(\la) \log \frac{2 n}{\tau}}{m}}
		$$
		with probability $1-2 \tau-\delta$.
		The last step is to bound $\left\|\Sigma_{\lambda}^{-1 / 2} \wh\Sigma_{w\lambda}^{1 / 2}\right\|^2$, as follows
		$$
		\left\|\Sigma_{\lambda}^{-1 / 2} \wh\Sigma_{\lambda w}^{1 / 2}\right\|^2=\left\|\Sigma_{\lambda}^{-1 / 2} \wh\Sigma_{\lambda w} \Sigma_{\lambda}^{-1 / 2}\right\|=\left\|I+\Sigma_{\lambda}^{-1 / 2}\left(\wh\Sigma_w-\Sigma\right) \Sigma_{\lambda}^{-1 / 2}\right\| \leq 1+\eta,
		$$
		with $\eta=\left\|\Sigma_{\lambda}^{-1 / 2}\left(\wh\Sigma_w-\Sigma\right) \Sigma_{\lambda}^{-1 / 2}\right\|$. We can bound $\eta$ using Lemma 18 in \cite{gogolashvili2023importance} (see eq.~\eqref{eq:bernstein_op}).
		Finally, by collecting the above results and taking a union bound we have
		$$
		\beta(\lambda) \leq 4\left(\frac{W}{\lambda n}+ \sqrt{\frac{\sigma^2\mathcal{N}_{\rho^{te}_X}(\la)^{1-q}}{\lambda^{1+q} n}}\right) \log \left(\frac{2}{\tau}\right) +\left(1+\eta\right)\left(\frac{2 \log \frac{2 n}{\tau}}{3 m}+\sqrt{\frac{16 T^2 \mathcal{N}_{\rho^{te}_X}(\la) \log \frac{2 n}{\tau}}{m}}\right),
		$$
		with probability $1-4 \tau-\delta=1-2 \delta$ when $ \left(\frac{256(W+\sigma^2)\log^2(4/\delta)}{n}\right)^{\frac{1}{1+q}}\leq \la\leq\|\Sigma\|$.
		 Note that, if we select $\left(\frac{256(W+\sigma^2)\log^2(4/\delta)}{n}\right)^{\frac{1}{\gamma(1-q)+1+q}}\leq \la\leq\|\Sigma\|$,
and $m\geq 144 T^2 \mathcal{N}_{\rho^{te}_X}(\la) \log \frac{8 n}{\delta}$,
		 we have $\beta(\lambda) \leq 5 / 6$, so that
		$$
		\left\|\left(I-P_m\right) \Sigma^{1 / 2}\right\|^2 \leq 6 \lambda
		$$
		with probability $1-2 \delta$.
	\end{proof}
\end{proposition}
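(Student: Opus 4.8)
The plan is to follow the operator-decomposition strategy of \cite{rudi2015less}, adapting every concentration step to accommodate the weighted empirical covariance $\wh\Sigma_w=\frac1n\sum_i w(x_i)K_{x_i}\otimes K_{x_i}$ and the possibly unbounded weights. First I would introduce a diagonal rescaling matrix $H\in\R^{n\times n}$ encoding the ALS sampling counts, with $(H)_{ii}=nq(i)/(mP_\la(i))$ whenever $P_\la(i)>0$ and $0$ otherwise, so that $\wh S_w^*H\wh S_w=\frac1m\sum_{j\in J}w(x_j)\frac{q(j)}{P_\la(j)}K_{x_j}\otimes K_{x_j}$ has range exactly $\H_m$. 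Since $\operatorname{ran}P_m=\operatorname{ran}(\wh S_w^*H\wh S_w)$, Propositions~3 and~7 of \cite{rudi2015less} yield the reduction
$$
\nor{(I-P_m)\Sigma_\la^{1/2}}^2\le\frac{\la}{1-\beta(\la)},\qquad \beta(\la)=\lambda_{\max}\!\big(\Sigma_\la^{-1/2}(\Sigma-\wh S_w^*H\wh S_w)\Sigma_\la^{-1/2}\big),
$$
so the whole problem reduces to showing $\beta(\la)\le 5/6$ with high probability.

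\textbf{Decomposition of $\beta$.} Next I would split $\beta(\la)\le\beta_1(\la)+\beta_2(\la)$ by inserting $\wh\Sigma_w$: the term $\beta_1=\lambda_{\max}(\Sigma_\la^{-1/2}(\Sigma-\wh\Sigma_w)\Sigma_\la^{-1/2})$ measures the deviation of the weighted empirical covariance from the test covariance, while the subsampling term $\beta_2$ collects $\wh\Sigma_w-\wh S_w^*H\wh S_w$. The term $\beta_1$ (and similarly the factor $\eta$ entering $\beta_2$) is exactly where unboundedness of $w$ bites: I would control it not by the classical bounded Bernstein inequality but by the moment version, i.e.\ Lemma~18 of \cite{gogolashvili2023importance}, which under Assumption~\ref{ass:bounded_moments} gives a bound of the form $4\big(\frac{W}{\la n}+\sqrt{\sigma^2\mathcal{N}_{\rho^{te}_X}(\la)^{1-q}/(\la^{1+q}n)}\big)\log(2/\tau)$.

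\textbf{The subsampling term.} To handle $\beta_2$ I would factor $\beta_2\le\nor{\Sigma_\la^{-1/2}\wh\Sigma_{w\la}^{1/2}}^2\,\beta_3$ with $\beta_3=\nor{\wh\Sigma_{w\la}^{-1/2}(\wh\Sigma_w-\wh S_w^*H\wh S_w)\wh\Sigma_{w\la}^{-1/2}}$, bounding the prefactor by $1+\eta$ through the same moment Bernstein argument. The core is to recast $\beta_3$ as a finite-dimensional matrix concentration problem: using $\wh S_w\wh\Sigma_{w\la}^{-1}\wh S_w^*=(\wh K_w+\la nI)^{-1}\wh K_w$ and the eigendecomposition $\wh K_w=UDU^\top$, set $A=\wh Q^{1/2}U^\top$ with $\wh Q=(D+\la nI)^{-1}D$, whose columns satisfy $\nor{a_i}^2=l_i(\la)$. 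Then $\beta_3=\nor{AA^\top-\frac1m\sum_{i\in\mathfrak I}\frac1{P_\la(i)}a_ia_i^\top}$ is the deviation of an unbiased empirical average of rank-one matrices from its mean $AA^\top$. I would use the $T$-approximation property to guarantee $P_\la(i)\ge T^{-2}\nor{a_i}^2/\operatorname{Tr}(AA^\top)$ and apply Proposition~9 of \cite{rudi2015less}, obtaining $\beta_3\lesssim \frac{\log(2n/\tau)}{m}+\sqrt{T^2\wh{\mathcal N}_w(\la)\log(2n/\tau)/m}$ since $\nor{A}^2\le1$ and $\operatorname{Tr}(AA^\top)=\wh{\mathcal N}_w(\la)$.

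\textbf{Closing and main obstacle.} Finally I would replace the empirical effective dimension by the true one through Proposition~\ref{prop:eff dim conc}, giving $\wh{\mathcal N}_w(\la)\le3\mathcal N_{\rho^{te}_X}(\la)$ on the stated range of $\la$, then collect all events by a union bound (rescaling $\tau=\delta/4$) to conclude $\beta(\la)\le5/6$ under $m\ge144T^2\mathcal N_{\rho^{te}_X}(\la)\log(8n/\delta)$ and the prescribed lower bound on $\la$; this yields $\la/(1-\beta)\le6\la$ and hence the claim. I expect the main obstacle to be the unbounded-weight concentration: the summands $\frac{w(x_i)}{P_\la(i)}a_ia_i^\top$ and the operators $\Sigma_\la^{-1/2}w(x)(K_x\otimes K_x)\Sigma_\la^{-1/2}$ admit no uniform bound, so every Bernstein step must pass through the moment control of Assumption~\ref{ass:bounded_moments}, and one must carefully track how the effective-dimension factor $\mathcal N_{\rho^{te}_X}(\la)^{1-q}\le Q^{1-q}\la^{-\gamma(1-q)}$ interacts with the $\la^{1+q}$ denominators to produce the sharp exponent $\gamma(1-q)+1+q$ in the admissible lower bound on $\la$.
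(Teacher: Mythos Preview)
Your proposal is correct and follows essentially the same route as the paper: the same diagonal rescaling $H$, the same reduction via Propositions~3 and~7 of \cite{rudi2015less}, the same split $\beta\le\beta_1+\beta_2$ with $\beta_2\le(1+\eta)\beta_3$, the same matrix-concentration treatment of $\beta_3$ through $A=\wh Q^{1/2}U^\top$ and Proposition~9 of \cite{rudi2015less}, and the same closing via Proposition~\ref{prop:eff dim conc}. One small clarification: in the $\beta_3$ step the weights $w(x_i)$ are already absorbed into $a_i$ through $\wh K_w$, and the randomness is only over the ALS draws conditional on the data, so ordinary matrix Bernstein applies there; the moment condition of Assumption~\ref{ass:bounded_moments} is needed only for $\beta_1$, $\eta$, and the effective-dimension concentration.
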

We can now proceed with the proof of Theorem~\ref{thm:risk bound}.
\paragraph{Proof of Theorem~\ref{thm:risk bound}}
\begin{proof}[proof of Theorem \ref{thm:risk bound}]
We split the excess risk as
\begin{align*}
	\left| \mathcal{R}(\wh f_{\la, m}^{w})-\mathcal{R}(f_\H)\right|^{1/2}&=\nor{\wh f_{\la, m}^{w}-f_\H}_{\rho_X^{te}}=\nor{\Sigma^{1/2}(\wh f_{\la, m}^{w}-f_\H)}_\H\\
	&=\nor{\Sigma^{1/2}(V(V^*\wh \Sigma_{w}V+\la\Id)^{-1}V^*\wh S^*M_wy-f_\H)}_\H \\
	&\leq \underbrace{\nor{\Sigma^{1/2}V(V^*\wh \Sigma_{w}V+\la\Id)^{-1}V^*\wh S^*M_w(y- \wh S f_\H)}_\H}_\mathbb{A} + \\
	& \quad +\underbrace{\nor{\Sigma^{1/2}(\Id-V(V^*\wh \Sigma_{w}V+\la\Id)^{-1}V^*\wh \Sigma_w)f_\H}_\H}_\mathbb{B}
\end{align*}

\paragraph{Term $\mathbb{A}$}
\begin{align*}
	\mathbb{A}\leq \underbrace{\nor{\Sigma^{1/2}\wh \Sigma_{w\la}^{-1/2}}}_{\mathbb{A}_1} \underbrace{\nor{\wh \Sigma_{w\la}^{1/2}V(V^*\wh \Sigma_{w\la}V)^{-1}V^*\wh \Sigma_{w\la}^{1/2}}}_{\mathbb{A}_2} \underbrace{\nor{\wh \Sigma_{w\la}^{-1/2}\Sigma_{\la}^{1/2}}}_{\mathbb{A}_3=\beta}
	\underbrace{\nor{\Sigma_{\la}^{-1/2}\wh S^*M_w(y-\wh S f_\H)}_\H}_{\mathbb{A}_4}
\end{align*}

\begin{itemize}
	\item $\mathbb{A}_1$:
	 \begin{align*}
		\mathbb{A}_1\leq \underbrace{\nor{\Sigma^{1/2}\Sigma_{\la}^{-1/2}}}_{\leq 1}\underbrace{\nor{\Sigma_{\la}^{1/2}\wh \Sigma_{w\la}^{-1/2}}}_{\beta}\leq \beta
	\end{align*}
	
	\item $\mathbb{A}_2$: using lemma 8 in \cite{rudi2015less} it's easy to show that $\nor{\wh \Sigma_{w\la}^{1/2}V(V^*\wh \Sigma_{w\la}V)^{-1}V^*\wh \Sigma_{w\la}^{1/2}}^2=\nor{\wh \Sigma_{w\la}^{1/2}V(V^*\wh \Sigma_{w\la}V)^{-1}V^*\wh \Sigma_{w\la}^{1/2}}$, and therefore the only possible values for $\mathbb{A}_2$ are $0$ and $1$. Then
	\begin{align*}
		\mathbb{A}_2\leq 1 
	\end{align*}

	\item $\mathbb{A}_3=\beta$: using proposition 7 in \cite{rudi2015less} we have
	\begin{align*}
		\beta \leq \frac{1}{1-b}
	\end{align*}
	if $b=\la_{\max}\left[\Sigma_{\la}^{-1/2}(\Sigma-\wh \Sigma_w)\Sigma_{\la}^{-1/2}\right]<1$.
	
	Applying Lemma 18 in \cite{gogolashvili2023importance} we get, with probability greater than $1-\delta$
	\begin{equation}
		\label{eq:bernstein_op}
		\la_{\max}\left[\Sigma_{\la}^{-1/2}(\Sigma-\wh \Sigma_w)\Sigma_{\la}^{-1/2}\right]\leq
		\left\|\Sigma_\la^{-\frac{1}{2}}\left(\Sigma-\wh \Sigma_w\right)\Sigma_\la^{-\frac{1}{2}}\right\|_{\mathrm{HS}} \leq 4\left(\frac{W}{\lambda n}+ \sqrt{\frac{\sigma^2\mathcal{N}_{\rho^{te}_X}(\la)^{1-q}}{\lambda^{1+q} n}}\right) \log \left(\frac{2}{\delta}\right)
	\end{equation}
	and for $n \lambda^{1+q} \geq 64\left(W+\sigma^2\right) \mathcal{N}_{\rho^{te}_X}(\la)^{1-q} \log ^2\left(\frac{2}{\delta}\right),
	$ with probability greater than $1-\delta$ the above quantity is less or equal than $3/4$.
	\item $\mathbb{A}_4$:\\
	
	To control this term we use lemma 19 in \cite{gogolashvili2023importance}, where $\xi_i = \Sigma_{\la}^{-1/2}w(x_i)K_{x_i}y_i$. We obtain that, with probability greater or equal than $1-\delta$
		$$
		\nor{\frac 1 n \sum^n \xi_i -\E[\xi]}_\H=\nor{\Sigma_{\la}^{-1/2}\wh S^*M_w(y-\wh S f_\H)}_\H\leq 4 B\left(\frac{W}{n \sqrt{\lambda}}+ \sqrt{\frac{\sigma^2\mathcal{N}_{\rho^{te}_X}(\la)^{1-q}}{n \lambda^{q}}}\right) \log \left(\frac{2}{\delta}\right).
		$$
\end{itemize}

\paragraph{Term $\mathbb{B}$} \;\newline
As regards $\mathbb{B}$, following \cite{rudi2015less} we proceed as follows. 
Noting that $V(V^*\wh \Sigma_{w\la}V)^{-1}V^*\wh \Sigma_{w\la}VV^*=V V^*$, we have
$$
\begin{aligned}
	I-& V(V^*\wh \Sigma_{w\la}V)^{-1}V^*\wh \Sigma_w =I-V(V^*\wh \Sigma_{w\la}V)^{-1}V^*\wh \Sigma_{w \lambda}+\lambda V(V^*\wh \Sigma_{w\la}V)^{-1}V^* \\
	& =I-V(V^*\wh \Sigma_{w\la}V)^{-1}V^*\wh \Sigma_{w \lambda} V V^*-V(V^*\wh \Sigma_{w\la}V)^{-1}V^*\wh \Sigma_{w\la}\left(I-V V^*\right)+\lambda V(V^*\wh \Sigma_{w\la}V)^{-1}V^* \\
	& =\left(I-V V^*\right)+\lambda V(V^*\wh \Sigma_{w\la}V)^{-1}V^*-V(V^*\wh \Sigma_{w\la}V)^{-1}V^* \wh \Sigma_{w \lambda}\left(I-V V^*\right) .
\end{aligned}
$$

By assumption~\ref{ass:source_cond_davit}, we have $\left\|\Sigma_{w\lambda}^{-(r-1/2)}f_{\mathcal{H}}\right\|_\mathcal{H}\leq \left\|\Sigma_{w\lambda}^{-r}f_{\mathcal{H}}\right\|_\mathcal{H}
 \leq \left\|\Sigma_{w}^{-r} f_{\mathcal{H}}\right\|_\mathcal{H}\leq R$. Define $r':=r-1/2$ to simplify the notation. Using the above decomposition, we can rewrite term $\mathbb{B}$ as
\begin{align*}
	\mathbb{B} \leq & \left\|\Sigma^{1 / 2}\left(I-V(V^*\wh \Sigma_{w\la}V)^{-1}V^* \wh \Sigma_w\right) \Sigma_{w\lambda}^{r'}\right\|\left\|\Sigma_{w\lambda}^{-{r'}} f_{\mathcal{H}}\right\|_\mathcal{H} \\
	\leq & R\left\|\Sigma^{1 / 2} \Sigma_{\lambda}^{-1 / 2}\right\|\left\|\Sigma_{\lambda}^{1 / 2}\left(I-VV^*\right) \Sigma_{\lambda}^{r'}\right\|+R \lambda\left\|\Sigma^{1 / 2} \wh \Sigma_{w \lambda}^{-1 / 2}\right\|\left\|\wh \Sigma_{w \lambda}^{1 / 2} V(V^*\wh \Sigma_{w\la}V)^{-1}V^* \Sigma_{\lambda}^{r'}\right\| \\
	& +R\left\|\Sigma^{1 / 2} \wh \Sigma_{w \lambda}^{-1 / 2}\right\|\left\|\wh \Sigma_{w \lambda}^{1 / 2} V(V^*\wh \Sigma_{w\la}V)^{-1}V^* \wh \Sigma_{w \lambda}^{1 / 2}\right\|\left\|\wh \Sigma_{w \lambda}^{1 / 2} \Sigma_{\lambda}^{-1 / 2}\right\|\left\|\Sigma_{\lambda}^{1 / 2}\left(I-VV^*\right) \Sigma_{\lambda}^{r'}\right\| \\
	\leq & R(1+\beta \theta) \underbrace{\left\|\Sigma_{\lambda}^{1 / 2}\left(I-VV^*\right) \Sigma_{\lambda}^{r'}\right\|}_{\mathbb{B}.1}+R \beta \underbrace{\lambda\left\|\wh \Sigma_{w \lambda}^{1 / 2} V(V^*\wh \Sigma_{w\la}V)^{-1}V^* \Sigma_{\lambda}^{r'}\right\|}_{\mathbb{B}.2},
\end{align*}
with $\theta=\left\|\wh \Sigma_{w \lambda}^{1 / 2} \Sigma_{\lambda}^{-1 / 2}\right\|$.

\begin{itemize}
	\item \textbb{B}.1:
	$$
	\mathbb{B}.1 =\left\|\Sigma_{\lambda}^{1 / 2}\left(I-VV^*\right)^2 \Sigma_{\lambda}^{r'}\right\| \leq\left\|\Sigma_{\lambda}^{1 / 2}\left(I-VV^*\right)\right\|\left\|\left(I-VV^*\right) \Sigma_{\lambda}^{r'}\right\| .
	$$
	Since $VV^*$ is a projection operator, we have that $\left(I-VV^*\right)=\left(\mathrm{I}-VV^*\right)^s$, for any $s>0$, therefore,
	by applying Cordes inequality (see proposition \ref{prop: cordes}) to $\left\|\left(I-VV^*\right) \Sigma_{\lambda}^{r'}\right\|$, we have
	$$
	\left\|\left(\mathrm{I}-VV^*\right) \Sigma_{\lambda}^{r'}\right\|=\left\|\left(\mathrm{I}-VV^*\right)^{2 r} \Sigma_{\lambda}^{\frac{1}{2} 2 {r'}}\right\|\leq\left\|\left(\mathrm{I}-VV^*\right) \Sigma_{\lambda}^{1 / 2}\right\|^{2 {r'}}.
	$$
	This term can be controlled using Proposition \ref{prop: projection} above.
	
	\item \textbb{B}.2:
	$$
	\begin{aligned}
		\mathbb{B}.2 & \leq \lambda\left\|\wh \Sigma_{w \lambda}^{1 / 2} V(V^*\wh \Sigma_{w\la}V)^{-1}V^* \wh \Sigma_{w\lambda}^{r'}\right\|\left\|\wh \Sigma_{w \lambda}^{-{r'}} \Sigma_{\lambda}^{r'}\right\| \\
		& \leq \lambda\left\|\wh \Sigma_{w \lambda}^{1 / 2} V(V^*\wh \Sigma_{w\la}V)^{-1}V^* \wh \Sigma_{w \lambda}^{r'}\right\|\left\|\wh \Sigma_{w \lambda}^{-1 / 2} \Sigma_{\lambda}^{1 / 2}\right\|^{2 {r'}} \\
		& \leq \beta^{2 {r'}} \lambda\left\|\left(V^* \wh \Sigma_{w \lambda} V\right)^{1 / 2}\left(V^* \wh \Sigma_{w \lambda} V\right)^{-1}\left(V^* \wh \Sigma_{w \lambda} V\right)^{r'}\right\| \\
		& =\beta^{2 {r'}} \lambda\left\|\left(V^* \wh \Sigma_w V+\lambda I\right)^{-(1 / 2-{r'})}\right\| \leq \beta \lambda^{1 / 2+{r'}}= \beta \lambda^{r},
	\end{aligned}
	$$
	where the first step is obtained multiplying and dividing by $\wh \Sigma_{w \lambda}^{r'}$, the second step by applying Cordes inequality, the third step by Prop. 6 in \cite{rudi2015less}.
\end{itemize}

Putting all together, for $\left(\frac{256(W+\sigma^2)\log^2(4/\delta)}{n}\right)^{\frac{1}{\gamma(1-q)+1+q}}\leq \la\leq\|\Sigma\|$,
and $m\geq 144 T^2 Q \la^{-\gamma} \log \frac{8 n}{\delta}$, with probability greater or equal than $1-\delta$
\begin{align*}
	\left| \mathcal{R}(\wh f_{\la, m}^{w})-\mathcal{R}(f_\H)\right|^{1/2}&\leq 4B\beta^2\left(\frac{W}{n \sqrt{\lambda}}+\sqrt{\frac{\sigma^2}{n \lambda^{\gamma(1-q)+q}}}\right) \log \left(\frac{8}{\delta}\right)+3R(1+\beta \theta)\la^{r}+R \beta^2 \lambda^{r}.
\end{align*}
Under the above conditions on $\la$ we have that $\beta\leq4$ and $\theta\leq 2$ and then for $\left(\frac{256(W+\sigma^2)\log^2(4/\delta)}{n}\right)^{\frac{1}{\gamma(1-q)+1+q}}\leq \la\leq\|\Sigma\|$,
and $m\geq 144 T^2 Q \la^{-\gamma} \log \frac{8 n}{\delta}$, with probability greater or equal than $1-\delta$
\begin{align*}
	\left| \mathcal{R}(\wh f_{\la, m}^{w})-\mathcal{R}(f_\H)\right|^{1/2}&\leq 64B\left(\frac{W}{n \sqrt{\lambda}}+\sqrt{\frac{\sigma^2}{n \lambda^{\gamma(1-q)+q}}}\right) \log \left(\frac{8}{\delta}\right)+43R\la^{r}.
\end{align*}
\end{proof}

\section{Known results}
\label{app: known results}
In this section, we derive tight bounds for the effective dimension $\mathcal{N}_{\rho}(\la)$ defined in Definition~\ref{def:eff} when assuming polynomial decay of the eigenvalues $\sigma_j(\Sigma)$ of the covariance operator $\Sigma$. 
\begin{proposition}[Polynomial eigenvalues decay, Proposition 3 in \cite{caponnetto2007optimal}]
	\label{prop: eig polynom decay}
	If for some $\gamma\in\R^+$ and  $1<\beta<+\infty$
	\[
	\sigma_i \leq \gamma i^{-\beta}
	\]
	then 
	\begin{equation}
		\mathcal{N}_{\rho}(\la)\leq \gamma\frac{\beta}{\beta-1}\la^{-1/\beta}
	\end{equation}
	\begin{proof}
		Since the function $\sigma/(\sigma+\la)$ is increasing in $\sigma$ and using the spectral theorem $\Sigma=UDU^*$ combined with the fact that $\tr (UDU^*)=\tr (U(U^* D))=\tr D$
		\begin{equation}
			\mathcal{N}_{\rho}(\la)=\tr (\Sigma(\Sigma+\la I)^{-1})=\sum_{i=1}^\infty \frac{\sigma_i}{\sigma_i+\la}\leq \sum_{i=1}^\infty \frac{\gamma}{\gamma+i^\beta\la}
		\end{equation}
		The function $\gamma/(\gamma+x^\beta\la)$ is positive and decreasing, so 
		\begin{align}
			\mathcal{N}_{\rho}(\la)&\leq \int_0^\infty\frac{\gamma}{\gamma+x^\beta\la}dx\nonumber\\
			&=\alpha^{-1/\beta}\int_0^\infty\frac{\gamma}{\gamma+\tau^\beta}d\tau\nonumber\\
			&\leq \gamma\frac{\beta}{\beta-1}\la^{-1/\beta}
		\end{align}
		since $\int_0^\infty(\gamma+\tau^\beta)^{-1}\leq \beta/(\beta-1)$.
	\end{proof}
\end{proposition}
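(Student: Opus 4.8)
The plan is to reduce the effective dimension to an explicit spectral series and then control that series by an integral. First I would put $\mathcal{N}_\rho(\la)$ in operator form: from Definition~\ref{def:eff}, $\mathcal{N}_x(\la)=\scal{K_x}{(\Sigma+\la\I)^{-1}K_x}_\H=\operatorname{Tr}\!\big((\Sigma+\la\I)^{-1}(K_x\otimes K_x)\big)$, so taking $\E_{x\sim\rho}$, using $\E_\rho[K_x\otimes K_x]=\Sigma$ and linearity/continuity of the trace, one gets $\mathcal{N}_\rho(\la)=\operatorname{Tr}\!\big(\Sigma(\Sigma+\la\I)^{-1}\big)$. Since $\Sigma$ is self-adjoint, positive and trace-class, the spectral theorem gives $\Sigma=\sum_i\sigma_i\,(e_i\otimes e_i)$ with $\sigma_1\ge\sigma_2\ge\cdots\ge0$, and invariance of the trace under a change of orthonormal basis yields the closed form $\mathcal{N}_\rho(\la)=\sum_{i\ge1}\frac{\sigma_i}{\sigma_i+\la}$.

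Next I would combine monotonicity with the decay assumption. The scalar map $t\mapsto t/(t+\la)$ is nondecreasing on $[0,\infty)$, so $\sigma_i\le\gamma i^{-\beta}$ gives $\frac{\sigma_i}{\sigma_i+\la}\le\frac{\gamma i^{-\beta}}{\gamma i^{-\beta}+\la}=\frac{\gamma}{\gamma+i^{\beta}\la}$. As $x\mapsto\gamma/(\gamma+x^{\beta}\la)$ is positive and decreasing on $(0,\infty)$, the standard series–integral comparison gives $\sum_{i\ge1}\frac{\gamma}{\gamma+i^{\beta}\la}\le\int_0^{\infty}\frac{\gamma}{\gamma+x^{\beta}\la}\,dx$.

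It then remains to evaluate the integral. The change of variables $\tau=\la^{1/\beta}x$ extracts the rate, $\int_0^{\infty}\frac{\gamma}{\gamma+x^{\beta}\la}\,dx=\la^{-1/\beta}\int_0^{\infty}\frac{\gamma}{\gamma+\tau^{\beta}}\,d\tau$, and the remaining integral is a finite constant because $\beta>1$: splitting the range at $\tau=1$ (the integrand is $\le1$ on $[0,1]$ and $\le\gamma\tau^{-\beta}$ on $[1,\infty)$) bounds it by $\gamma\,\frac{\beta}{\beta-1}$, under the harmless normalization $\gamma\ge1$. Chaining the three estimates yields the claim $\mathcal{N}_\rho(\la)\le\gamma\,\frac{\beta}{\beta-1}\,\la^{-1/\beta}$.

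I do not expect a genuine obstacle here, since the statement is elementary. The only two spots needing care are the opening identity $\mathcal{N}_\rho(\la)=\operatorname{Tr}(\Sigma(\Sigma+\la\I)^{-1})$ — where one must justify exchanging expectation and trace and that $\Sigma(\Sigma+\la\I)^{-1}$ is trace-class so the eigenvalue series converges — and pinning down the numerical integral $\int_0^{\infty}\gamma(\gamma+\tau^{\beta})^{-1}\,d\tau$ by the advertised $\gamma\,\beta/(\beta-1)$ rather than merely by an unspecified constant depending on $\gamma$ and $\beta$.
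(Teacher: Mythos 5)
Your proof is correct and follows essentially the same route as the paper's: reduce $\mathcal{N}_\rho(\lambda)$ to the spectral series $\sum_i \sigma_i/(\sigma_i+\lambda)$, bound it term-wise via monotonicity of $t\mapsto t/(t+\lambda)$, compare the resulting series with the integral of the decreasing integrand, and rescale via $\tau=\lambda^{1/\beta}x$ to extract the rate $\lambda^{-1/\beta}$. Your explicit observation that the final constant $\gamma\,\beta/(\beta-1)$ requires the normalization $\gamma\ge 1$ (otherwise one only gets $\gamma^{1/\beta}\beta/(\beta-1)$) is a detail the paper's own proof silently glosses over.
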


Further improvements can be found assuming exponential decay of the eigenvalues $\sigma_j(\Sigma)$ of $\Sigma$.
\begin{proposition}[Exponential eigenvalues decay, Proposition 5 in \cite{della2021regularized}]
		\label{prop: eig exp decay}	
                  If for some $\gamma,\beta \in\R^+
                  \sigma_i\leq \gamma e^{-\beta i}$ then
		\begin{equation}
		\mathcal{N}_{\rho}(\la)\leq \frac{\ln(1+\gamma/\la)}{\beta}.
		\end{equation}
		\begin{proof}
			\begin{align}
			\label{exp decay}
			\mathcal{N}_{\rho}(\la)&=\sum_{i=1}^\infty \frac{\sigma_i}{\sigma_i+\la}=\sum_{i=1}^\infty \frac{1}{1+\la/\sigma_i}\leq\sum_{i=1}^\infty \frac{1}{1+\la' e^{\beta i}}\leq\int_0^{+\infty} \frac{1}{1+\la' e^{\beta x}}dx,
			\end{align}
			where $\la'=\la/\gamma$. Using the change of variables $t=e^{\beta x}$ we get
			\begin{align}
			(\ref{exp decay})&=\frac{1}{\beta}\int_1^{+\infty} \frac{1}{1+\la' t}\;\frac{1}{t}dt=\frac{1}{\beta}\int_1^{+\infty}\Big[\frac{1}{t}- \frac{\la'}{1+\la' t}\Big]dt=\frac{1}{\beta}\Big[ \ln t -\ln(1+\la't)\Big]_1^{+\infty}\nonumber\\
			&=\frac{1}{\beta}\Big[ \ln \Big(\frac{t}{1+\la't}\Big)\Big]_1^{+\infty}=\frac{1}{\beta}\Big[\ln(1/\la')+\ln(1+\la')\Big].
			\end{align}
			So we finally obtain
			\begin{equation}
			\mathcal{N}_{\rho}(\la)\leq \frac{1}{\beta}\Big[\ln(\gamma/\la)+\ln(1+\la/\gamma)\Big]=\frac{\ln(1+\gamma/\la)}{\beta}.
			\end{equation}
		\end{proof}
	\end{proposition}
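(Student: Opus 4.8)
The plan is to mirror the argument used for the polynomial-decay case in Proposition~\ref{prop: eig polynom decay}: rewrite the effective dimension as a series in the eigenvalues, bound it termwise via the decay hypothesis, and then dominate the series by an integral that can be computed in closed form. First I would recall, from the spectral theorem and the cyclic property of the trace (exactly as at the start of the proof of Proposition~\ref{prop: eig polynom decay}), that
$$\mathcal{N}_{\rho}(\la)=\tr\!\big(\Sigma(\Sigma+\la I)^{-1}\big)=\sum_{i=1}^{\infty}\frac{\sigma_i}{\sigma_i+\la}=\sum_{i=1}^{\infty}\frac{1}{1+\la/\sigma_i}.$$
Since $\sigma\mapsto \sigma/(\sigma+\la)$ has positive derivative on $[0,\infty)$, it is increasing, so the hypothesis $\sigma_i\le \gamma e^{-\beta i}$ yields $\mathcal{N}_{\rho}(\la)\le \sum_{i\ge 1}\big(1+\la' e^{\beta i}\big)^{-1}$, where I set $\la'=\la/\gamma$.

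Next I would pass from the sum to an integral. The function $x\mapsto \big(1+\la' e^{\beta x}\big)^{-1}$ is positive and strictly decreasing on $[0,\infty)$, hence $\sum_{i\ge 1}\big(1+\la' e^{\beta i}\big)^{-1}\le \int_{0}^{\infty}\big(1+\la' e^{\beta x}\big)^{-1}\,dx$. The last step is to evaluate this integral: the substitution $t=e^{\beta x}$ (so that $dt=\beta t\,dx$) rewrites it as $\tfrac1\beta\int_{1}^{\infty}\frac{dt}{t(1+\la' t)}$, and the partial-fraction identity $\frac{1}{t(1+\la' t)}=\frac1t-\frac{\la'}{1+\la' t}$ gives the antiderivative $\tfrac1\beta\big(\ln t-\ln(1+\la' t)\big)=\tfrac1\beta\ln\frac{t}{1+\la' t}$. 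Evaluating between $t=1$ and $t\to\infty$ (where $t/(1+\la' t)\to 1/\la'$) produces $\tfrac1\beta\big(\ln\tfrac1{\la'}-\ln\tfrac1{1+\la'}\big)=\tfrac1\beta\ln\frac{1+\la'}{\la'}=\tfrac1\beta\ln(1+\gamma/\la)$, which is precisely the asserted bound.

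I do not anticipate a real obstacle here, since every step is elementary: the termwise monotone replacement of $\sigma_i$ by its bound, the comparison of a decreasing series with its integral, and the closed-form evaluation of an explicit rational integral. The only places that call for a bit of care are tracking the constant $1/\beta$ produced by the change of variables and checking the two boundary contributions of the antiderivative, which combine to give the logarithm of $1+\gamma/\la$ rather than, say, of $\gamma/\la$.
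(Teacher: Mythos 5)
Your proposal is correct and follows essentially the same route as the paper's own proof: the termwise replacement of $\sigma_i$ by $\gamma e^{-\beta i}$ using monotonicity of $\sigma\mapsto\sigma/(\sigma+\la)$, the comparison of the decreasing series with $\int_0^\infty(1+\la'e^{\beta x})^{-1}dx$, and the evaluation via $t=e^{\beta x}$ and partial fractions. The boundary evaluation giving $\tfrac1\beta\ln(1+\gamma/\la)$ also matches the paper's computation exactly.
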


Next proposition gives some relations between weights and covariance operators.
\begin{proposition}
	\label{prop:ratio cov}
	Suppose that $\rho_X^{te}$ is absolutely continuous with respect to $\rho_X^v$, and the Radon-Nikodym derivative $d \rho_X^{te} / d \rho_X^v$ is bounded by $G\in \R$. Then $$\left\|\Sigma\Sigma_{v\la}^{-1}\right\| \leq G,$$
	 see \cite{gogolashvili2023importance}.\\
	 Moreover, if $w(x)<\infty$ and $v(x)>0$ for all $x\in\X$, with $w$ and $v$ defined as in eq.~\eqref{eq:def_w} and~\eqref{eq:def_v} respectively, then
	 $$\left\|\Sigma\Sigma_{v\la}^{-1}\right\| \leq \left\|\frac{w}{v}\right\|_\infty.$$
\begin{proof}
 For operators $A$ and $B$ on $\mathcal{H}$, denote by $A \geq B$ that $A-B$ is a non-negative operator. Let $G:=\left\|d \rho_X^{te} / d \rho_X^{v}\right\|_{\infty}$. For all $f \in \mathcal{H}$, we have
$$
\begin{aligned}
	& \langle f, \Sigma f\rangle_{\mathcal{H}}=\left\langle f, \int K_x f(x) d \rho_X^{te}(x)\right\rangle_{\mathcal{H}}=\int\left\langle f, K_x\right\rangle_{\mathcal{H}} f(x) d \rho_X^{te}(x)=\int f^2(x) d \rho_X^{te}(x) \\
	& =\int f^2(x) \frac{d \rho_X^{te}}{d \rho_X^{v}}(x) d \rho_X^{v}(x) \leq G \int f^2(x) d \rho_X^{v}(x)=G\left\langle f, \Sigma_v f\right\rangle_{\mathcal{H}} .
\end{aligned}
$$

Therefore, we have
$$
\Sigma \preccurlyeq G \Sigma_{v} \preccurlyeq  G\left(\Sigma_{v}+\lambda\I\right) \quad \Longrightarrow \quad \Sigma\left(\Sigma_{v}+\lambda\I\right)^{-1} \preccurlyeq G \mathbb{I},
$$
where $\mathbb{I}: \mathcal{H} \mapsto \mathcal{H}$ is the identity operator. This implies $\left\|\Sigma\left(\Sigma_{v}+\lambda\right)^{-1}\right\| \leq G$, which proves the first assertion.
The second part comes simply from the fact that
$$
\begin{aligned}
	& \langle f, \Sigma f\rangle_{\mathcal{H}}=\int f^2(x) \frac{d \rho_X^{te}}{d \rho_X^{tr}}(x) \frac{d \rho_X^{tr}}{d \rho_X^{v}}(x) d \rho_X^{v}(x)= \int f^2(x) w(x) \frac{1}{v(x)} d \rho_X^{v}(x)\leq \left\|\frac{w}{v}\right\|_\infty\left\langle f, \Sigma_v f\right\rangle_{\mathcal{H}} .
\end{aligned}
$$
\end{proof}
\end{proposition}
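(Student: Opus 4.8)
Both assertions reduce to a single Loewner (operator) inequality of the form $\Sigma\preccurlyeq c\,\Sigma_v$, obtained by pushing an $L^\infty$ bound on a Radon--Nikodym derivative through the integral representation of the covariance operators; everything else is elementary operator calculus. The plan is to prove the first (``$G$'') statement in detail and then point out the one extra ingredient needed for the ``moreover'' part.

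First I would unfold the quadratic form of $\Sigma$: for an arbitrary $f\in\H$, using $\Sigma=\E_{\rho_X^{te}}[K_x\otimes K_x]$ and the reproducing identity $\langle f,K_x\rangle_\H=f(x)$, one gets $\langle f,\Sigma f\rangle_\H=\int f(x)^2\,d\rho_X^{te}(x)$ and, identically, $\langle f,\Sigma_v f\rangle_\H=\int f(x)^2\,d\rho_X^v(x)$. Since $\rho_X^{te}\ll\rho_X^v$ with $d\rho_X^{te}/d\rho_X^v\le G$, a change of measure gives $\langle f,\Sigma f\rangle_\H=\int f(x)^2\,(d\rho_X^{te}/d\rho_X^v)(x)\,d\rho_X^v(x)\le G\int f(x)^2\,d\rho_X^v(x)=G\,\langle f,\Sigma_v f\rangle_\H$ for every $f\in\H$, i.e. $\Sigma\preccurlyeq G\,\Sigma_v$. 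Adding $\lambda\I\succcurlyeq 0$ on the right, $\Sigma\preccurlyeq G\,\Sigma_v\preccurlyeq G(\Sigma_v+\lambda\I)=G\,\Sigma_{v\lambda}$.

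It then remains to turn this ordering into a bound on $\|\Sigma\Sigma_{v\lambda}^{-1}\|_{op}$. Since $\Sigma_{v\lambda}\succcurlyeq\lambda\I$ is boundedly invertible, conjugating $\Sigma\preccurlyeq G\,\Sigma_{v\lambda}$ by $\Sigma_{v\lambda}^{-1/2}$ gives the self-adjoint inequality $\Sigma_{v\lambda}^{-1/2}\Sigma\,\Sigma_{v\lambda}^{-1/2}\preccurlyeq G\,\I$, hence $\|\Sigma_{v\lambda}^{-1/2}\Sigma\,\Sigma_{v\lambda}^{-1/2}\|\le G$; and since $\Sigma\Sigma_{v\lambda}^{-1}=\Sigma_{v\lambda}^{1/2}\bigl(\Sigma_{v\lambda}^{-1/2}\Sigma\,\Sigma_{v\lambda}^{-1/2}\bigr)\Sigma_{v\lambda}^{-1/2}$ is similar to that self-adjoint operator, it has the same spectrum and the bound transfers, yielding $\|\Sigma\Sigma_{v\lambda}^{-1}\|_{op}\le G$. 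I expect this last passage --- extracting a norm bound on the non-self-adjoint product $\Sigma\Sigma_{v\lambda}^{-1}$ from a Loewner order --- to be the only point that genuinely needs care (the symmetrisation step above); the change-of-measure computation itself is routine.

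For the ``moreover'' statement I would rerun the argument after factoring the density through $\rho_X^{tr}$. The hypothesis $w<\infty$ means $w$ is a genuine density, so $\rho_X^{te}\ll\rho_X^{tr}$ with $d\rho_X^{te}/d\rho_X^{tr}=w$, and $v>0$ everywhere gives $\rho_X^{tr}\ll\rho_X^v$ with $d\rho_X^{tr}/d\rho_X^v=1/v$; by the chain rule for Radon--Nikodym derivatives, $d\rho_X^{te}/d\rho_X^v=w/v$ $\rho_X^v$-almost surely. Substituting into $\langle f,\Sigma f\rangle_\H=\int f(x)^2\,(d\rho_X^{te}/d\rho_X^{tr})(x)\,(d\rho_X^{tr}/d\rho_X^v)(x)\,d\rho_X^v(x)\le\|w/v\|_\infty\,\langle f,\Sigma_v f\rangle_\H$ gives $\Sigma\preccurlyeq\|w/v\|_\infty\,\Sigma_{v\lambda}$, and the same symmetrisation step concludes $\|\Sigma\Sigma_{v\lambda}^{-1}\|_{op}\le\|w/v\|_\infty$. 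The only additional obstacle here is bookkeeping the absolute-continuity chain $\rho_X^{te}\ll\rho_X^{tr}\ll\rho_X^v$ so that the chain rule applies --- which is exactly what the hypotheses $w<\infty$ and $v>0$ secure.
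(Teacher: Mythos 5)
Your proposal follows essentially the same route as the paper: unfold the quadratic forms of $\Sigma$ and $\Sigma_v$, change measure to get the Loewner inequality $\Sigma\preccurlyeq G\,\Sigma_v\preccurlyeq G(\Sigma_v+\lambda\I)$, and then pass to the operator-norm bound, with the second assertion obtained by factoring the density as $w/v$. If anything, your symmetrisation step (conjugating by $\Sigma_{v\lambda}^{-1/2}$ and using similarity to transfer the spectral bound to the non-self-adjoint product $\Sigma\Sigma_{v\lambda}^{-1}$) is slightly more careful than the paper's terse implication ``$\Sigma\preccurlyeq G\,\Sigma_{v\lambda}\Rightarrow\Sigma\Sigma_{v\lambda}^{-1}\preccurlyeq G\,\mathbb{I}$'', which glosses over exactly that point.
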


\begin{proposition}[Cordes Inequality \cite{fujii1993norm}]
	\label{prop: cordes}
	Let $A, B$ two positive semidefinite bounded linear operators on a separable Hilbert space. Then
$$
\left\|A^s B^s\right\| \leq\|A B\|^s \quad \text { when } 0 \leq s \leq 1 \text {. }
$$
\end{proposition}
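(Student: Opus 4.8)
The plan is to deduce the Cordes inequality from the L\"owner--Heinz inequality (the operator monotonicity of $t\mapsto t^s$ on $[0,\infty)$ for $0\le s\le 1$) via a scaling reduction and a sandwiching argument; the cases $s\in\{0,1\}$ are trivial, so assume $0<s<1$.

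First I would reduce to the case where $A$ and $B$ are invertible. Replacing $A$ by $A+\varepsilon\I$ and $B$ by $B+\varepsilon\I$, the operators $A^s,B^s$ and $A,B$ depend norm-continuously on $\varepsilon$ (continuity of the continuous functional calculus applied to $t\mapsto t^s$, uniformly on a fixed compact interval), hence so do $\|A^sB^s\|$ and $\|AB\|$; thus it suffices to establish the inequality for positive \emph{invertible} operators and then let $\varepsilon\downarrow 0$. This also covers the degenerate case $\|AB\|=0$, where the right-hand side is $0$ and the limit forces $\|A^sB^s\|=0$. For invertible $A,B$, set $c:=\|AB\|>0$ and replace $B$ by $B/c$; since $A^s(B/c)^s=c^{-s}A^sB^s$ and $\|A(B/c)\|=1$, the claim $\|A^sB^s\|\le\|AB\|^s$ becomes equivalent to the statement: $\|AB\|\le 1$ implies $\|A^sB^s\|\le 1$.

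So assume $\|AB\|\le 1$. Using $\|X\|^2=\|XX^*\|$ with $X=A^sB^s$ and $(A^sB^s)^*=B^sA^s$, one gets $\|A^sB^s\|^2=\|A^sB^{2s}A^s\|$, and likewise $\|AB\|^2=\|AB^2A\|\le 1$. Since $AB^2A\ge 0$, this last bound means $AB^2A\le\I$; conjugating by $A^{-1}$ (which preserves operator order) yields $B^2\le A^{-2}$. Applying the L\"owner--Heinz inequality with exponent $s\in[0,1]$ gives $B^{2s}\le A^{-2s}$, and conjugating this by $A^s$ (all powers of $A$ commute) gives $A^sB^{2s}A^s\le A^sA^{-2s}A^s=\I$. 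Hence $\|A^sB^s\|^2=\|A^sB^{2s}A^s\|\le 1$, which is exactly what we wanted.

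The argument is short, and the only non-elementary ingredient is the L\"owner--Heinz inequality, which is classical. The main thing to get right is the chain of reductions — passing to invertible operators so that the conjugations $X\mapsto A^{-1}XA^{-1}$ and $X\mapsto A^sXA^s$ are legitimate and the normalization $\|AB\|=1$ makes sense — together with the observation that the two identities of the form $\|\cdot\|^2=\|\cdot(\cdot)^*\|$ convert the statement into an operator inequality that L\"owner--Heinz resolves directly.
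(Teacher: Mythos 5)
Your proof is correct. Note that the paper itself offers no proof of this proposition: it is stated as a known result and deferred entirely to the citation of Fujii et al., so there is no argument to compare yours against line by line. What you give is the classical derivation of the Cordes inequality from the L\"owner--Heinz theorem, and every step checks out: the $\varepsilon$-regularization and norm-continuity of the functional calculus legitimately reduce to invertible $A,B$ (and absorb the degenerate case $\|AB\|=0$); the homogeneity $ (B/c)^s=c^{-s}B^s$ reduces the claim to the implication $\|AB\|\le 1\Rightarrow\|A^sB^s\|\le 1$; the $C^*$-identity $\|X\|^2=\|XX^*\|$ turns both sides into the positive operators $A^sB^{2s}A^s$ and $AB^2A$; and the chain $AB^2A\le I\Rightarrow B^2\le A^{-2}\Rightarrow B^{2s}\le A^{-2s}\Rightarrow A^sB^{2s}A^s\le I$ uses only order-preservation under congruence and L\"owner--Heinz with exponent $s\in[0,1]$, both of which are valid. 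The trade-off is the expected one: the paper's citation leans on the literature (where Cordes, Heinz, and L\"owner--Heinz are shown to be mutually equivalent), whereas your argument makes the appendix self-contained modulo one classical input, at the cost of invoking L\"owner--Heinz, which is itself nontrivial to prove from scratch. Either way, the proposition is established.
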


\section{Experimental Details and Datasets}\label{sec:Datasets}
This section includes some additional details on the experimental setting and a detailed summary of all the datasets presented. 

The four datasets consist of data collected from multiple users using wearable sensors, such as accelerometers and gyroscopes. We allocate 1\% of the test dataset for weight estimation, reserving the remaining
99\% for predictions. Weights are estimated using Relative Unconstrained Least-Squares
Importance Fitting (RuLSIF) method, which minimizes the
relative density-ratio divergence between distributions \citep{yamada2013relative,liu2013change}.The experiments were conducted in Python on a 2018 MacBook Pro with a 2.3 GHz Intel Core i5 Quad-Core processor, 16GB of RAM, and no GPU. 

We give here a brief description of the four datasets.
\begin{itemize}
    \item The HHAR dataset \citep{stisen2015smart}, accessible on DR-NTU (Data), dataset DOI: https://doi.org/10.21979/N9/OWDFXO, version used: 3.0 (May 27, 2022), license: CC BY-NC 4.0, authors of the dataset version: Mohamed Ragab and Emadeldeen Eldele. It
    comprises 13,062,475 samples with 10 features for human activity recognition using smartphone and smartwatch sensors. It explores the effect of sensor heterogeneity on activity recognition algorithms, aiming to classify physical activities. Data were collected from accelerometer and gyroscope sensors during activities like Biking, Sitting, Standing, Walking, Stair Up, and Stair Down. Nine users participated, using 4 smartwatches (2 LG, 2 Samsung Galaxy) and 8 smartphones (2 each of Samsung Galaxy S3 mini, Samsung Galaxy S3, LG Nexus 4, and Samsung Galaxy S+). Features include accelerometer readings (x, y, z), user ID, device, model, and activity labels. Non-essential columns—index, arrival time, and creation time—were removed. The categorical columns (device and model) were converted to numerical values. User A has 1,218,871 samples designated as training data, from which 15,500 samples were randomly selected, while user H provides the test data.
    
    \item The HAR70+ dataset \citep{ustad2023validation}, available from the UCI Machine Learning Repository, dataset DOI: https://doi.org/10.24432/C5CW3D, version used: Latest version available via the Norwegian Centre for Research Data (NSD), license: CC BY 4.0 (Creative Commons Attribution 4.0 International), dataset authors: Aleksej Logacjov, Astrid Ustad. It contains 2,259,597 samples with 6 features for activity classification. Data were recorded from 18 older adults (ages 70–95) wearing two Axivity AX3 accelerometers at 50 Hz for ~40 minutes in a semi-structured, free-living setting. Five participants used walking aids. Sensors were placed on the right thigh and lower back. Activities such as walking, shuffling, ascending/descending stairs, standing, sitting, and lying were annotated frame-by-frame using chest-mounted camera video. For training and testing, we used data from two files: 518.csv, containing 141,714 samples with 6 features, and 516.csv, containing 138,278 samples with 6 features. From 518.csv, 20,000 samples were randomly selected for training. To introduce covariate shift and study feature selection, the timestamp and thigh accelerations in the x and z directions were excluded.
    
  \item The HARChildren dataset \citep{torring2024validation}, available on DR-NTU (Data), dataset DOI: https://doi.org/10.18710/EPCXCC, version used: published on August 30, 2024, license: CC0 1.0,
 dataset authors: Marte Fossflaten Tørring et al. It contains over 5 million samples with 8 features for activity classification. It includes data from 63 typically developing (TD) children and 16 children with Cerebral Palsy (CP), classified at levels I and II of the Gross Motor Function Classification System (GMFCS). These children wore two accelerometers, on the lower back and thigh. The features initially included a timestamp and accelerometer readings (x, y, z), but the timestamp was removed during preprocessing. The recorded activities include walking, running, shuffling, ascending and descending stairs, standing, sitting, lying, bending, cycling seated, cycling standing, and jumping. For training and testing, two files were used: 004.csv, which contains 354,139 samples, and 010.csv, which contains 238,574 samples. From 010.csv, 15,000 samples were randomly selected for training, while 004.csv was used for testing.

    \item The WISDM dataset \citep{kwapisz2011activity}, available on DR-NTU (Data), dataset DOI: https://doi.org/10.21979/N9/KJWE5B, version used: 1.0 (May 27, 2022), license: CC BY-NC 4.0, dataset authors: Mohamed Ragab and Emadeldeen Eldele. It contains 1,098,209 samples with 5 features from accelerometer and gyroscope data collected at 20 Hz using smartphones and smartwatches. Data were recorded from 36 users performing 18 activities for 3 minutes each. Features include a user ID, activity code (label), and sensor readings (x, y, z). User 12 has 32,641 samples designated as training data, from which 25,000 samples were randomly selected, while user 19 provides the test data.
\end{itemize}

A common way to specify the grid of possible values of $\lambda$ is to consider a geometric series.

\begin{remark}[Geometric grid]
Let $\lambda_{min}$ and $\lambda_{max}$ the smallest and largest  values of the regularization parameter we wish to consider and let 
$$
b= \left(\frac{\lambda_{max}}{\lambda_{min}}\right)^{1/(Q-1)}.
$$
We generate a geometric grid of $Q$ values of the regularization if  for $q= 1, \dots,Q$,  we let 
$$\lambda_q = b^{q-1} \lambda_{min},$$
so that  $\lambda_1= \lambda_{min}$ and   $\lambda_Q= \lambda_{max}$.
\end{remark}
We used $\lambda_1 = 10^{-4}$, $\lambda_Q = 1$ and $Q = 10$ to generate the geometric grid. We also propose a method for generating a geometric sequence of \(\gamma\) values, derived from distinct formulas for even and odd indices.

\begin{remark}
The following values represent the parameter \(\gamma\) for the Gaussian kernel: 
\[
\gamma_k = \begin{cases} 
10^{-3 + \frac{k-1}{2}}, & k \text{ odd}, \\
5 \cdot 10^{-3 + \frac{k-2}{2}}, & k \text{ even},
\end{cases} \quad k = 1, 2, \dots, 6.
\]
\end{remark}

We optimize the hyperparameters using hold-out cross-validation, partitioning the dataset into \(70\%\) for training and \(30\%\) for validation. For each combination of hyperparameters \(\lambda\) and \(\gamma\), we train the model on \(X_{\text{train}}\) and \(y_{\text{train}}\) and validate it on \(X_{\text{val}}\) and \(y_{\text{val}}\).


\begin{figure*}[h!] 
    \centering       \hspace{0.2cm}
\includegraphics[width=0.31\columnwidth]{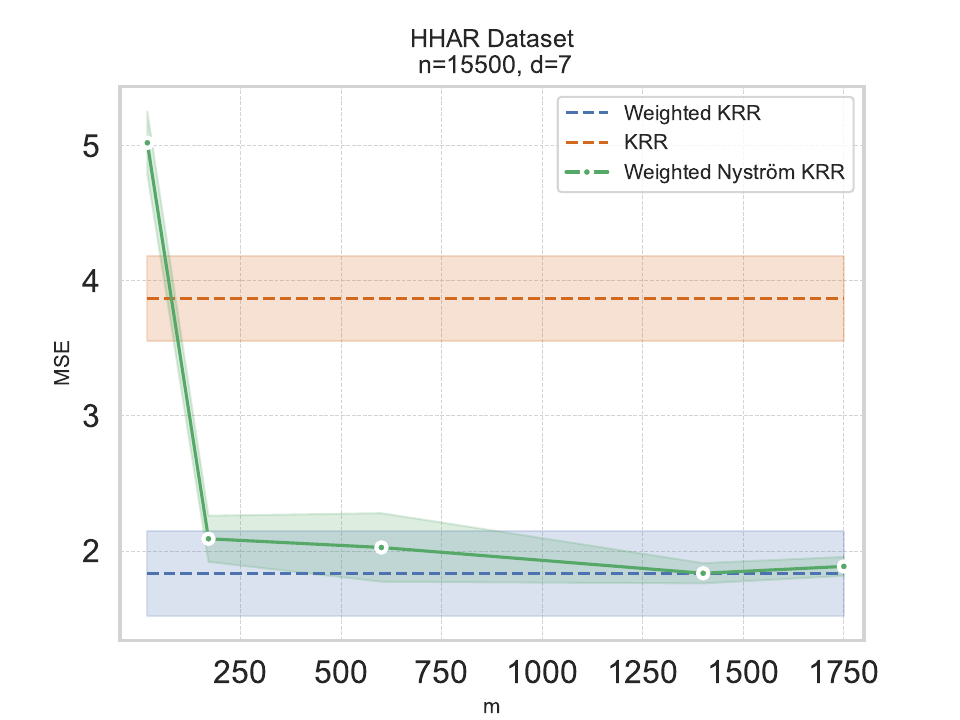} 
        \hspace{0.2cm}
\includegraphics[width=0.31\columnwidth]{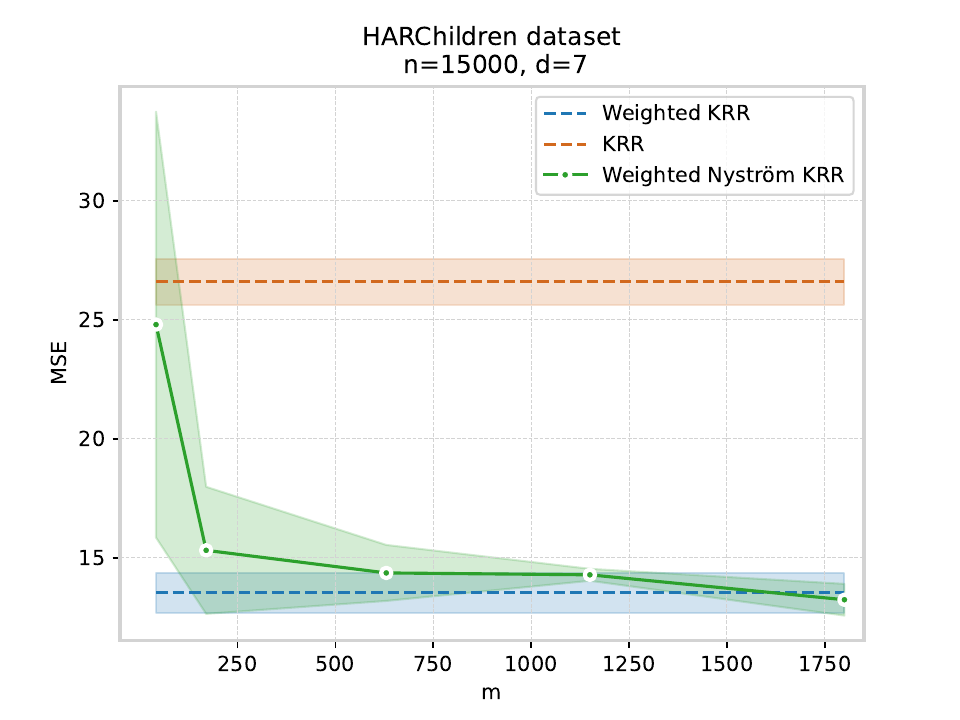} 
        \hspace{0.2cm}
\includegraphics[width=0.31\columnwidth]{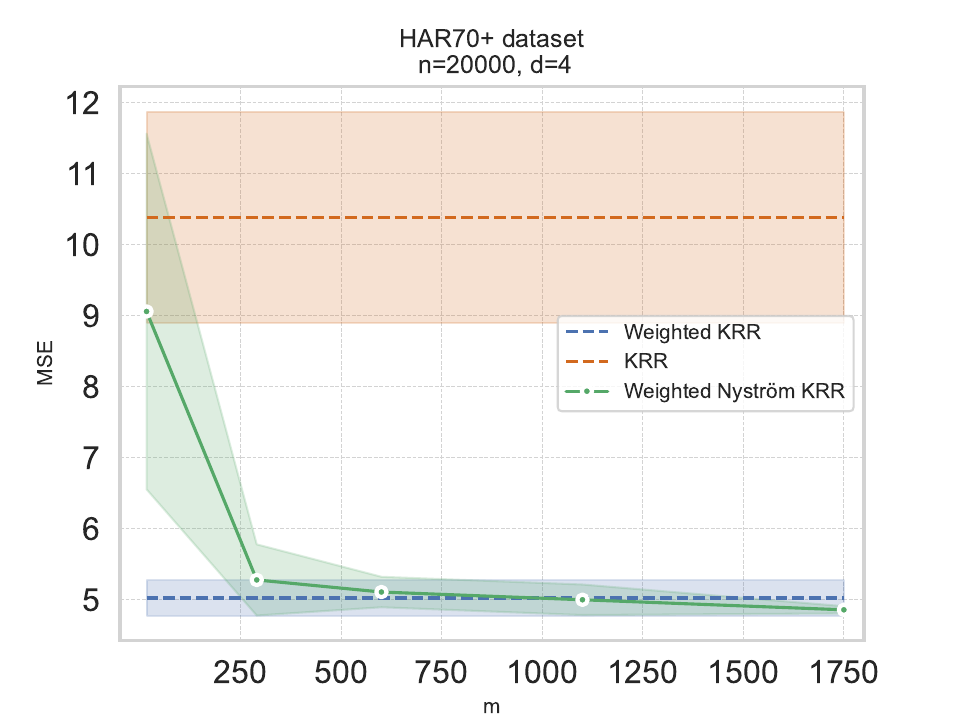} 
    \caption{The graphs above illustrate the results summarized in Table \ref{tab:PPer} for HHAR, HAR70+ and HARChildren datasets.}
    \label{fig:wat}
\end{figure*}

As regards the simulations, we report here the parameters used to generate the results in Figure~\ref{fig:davit}: $k = 50$ (that we
can consider misspecified), $\mu_{tr} = (0.7,0.7)$, $\sigma^2_{tr} =
\diag(0.7,0.7)$, $\mu_{te} = (1.8,1.8)$, $\sigma^2_{te} = \diag(0.5,0.5)$,
$\epsilon^2 = 0.2$, $c_1 = c_2 = 10$.

\end{document}